\documentclass{article}

\usepackage{PRIMEarxiv}

\usepackage[utf8]{inputenc} 
\usepackage[T1]{fontenc}    

\usepackage{algorithm}
\usepackage{amsthm}
\usepackage{amsmath}
\usepackage{algpseudocode}
\usepackage{multirow}
\usepackage{booktabs}
\usepackage{multicol}
\newtheorem{lemma}{Lemma}

\usepackage{graphicx}
\usepackage{subcaption}
\pagestyle{fancy}
\thispagestyle{empty}
\rhead{ \textit{ }}

\title{Leveraging Previous Steps: A Training-free Fast Solver for Flow Diffusion}

\author{
  Kaiyu Song, Hanjiang Lai \\
  Sun Yat-Sen University \\
  \texttt{\{songky7, laihanj3\}@mail2.sysu.edu.cn}
}
\begin{document}
\maketitle

\begin{abstract}
Flow diffusion models (FDMs) have recently shown potential in generation tasks due to the high generation quality. However, the current ordinary differential equation (ODE) solver for FDMs, e.g., the Euler solver, still suffers from slow generation since ODE solvers need many number function evaluations (NFE) to keep high-quality generation. In this paper, we propose a novel training-free flow-solver to reduce NFE while maintaining high-quality generation. The key insight for the flow-solver is to leverage the previous steps to reduce the NFE, where a cache is created to reuse these results from the previous steps. Specifically, the Taylor expansion is first used to approximate the ODE. To calculate the high-order derivatives of Taylor expansion, the flow-solver proposes to use the previous steps and a polynomial interpolation to approximate it, where the number of orders we could approximate equals the number of previous steps we cached. We also prove that the flow-solver has a more minor approximation error and faster generation speed. Experimental results on the CIFAR-10, CelebA-HQ, LSUN-Bedroom, LSUN-Church, ImageNet, and real text-to-image generation prove the efficiency of the flow-solver. Specifically, the flow-solver improves the FID-30K from 13.79 to 6.75, from 46.64 to 19.49 with $\text{NFE}=10$ on CIFAR-10 and LSUN-Church, respectively.

\end{abstract}    
\section{Introduction}
\label{sec:intro}


Flow diffusion models (FDMs)~\cite{flow_for_generative,rectified_flow}, which learn ordinary differential equation (ODE) to transport between noise and data distributions, are one of the generative models that could generate high-quality images. However, diffusion-based models always need many function evaluations (NFE)~\cite{sde}, which causes slow generation speed. Previous accelerating works~\cite{consistency_model,freeu} include training-based and training-free methods. Training-based methods rely on distillation~\cite{consistency_model,fast5}, which uses a student model with smaller time steps to distill the original diffusion models. Training-free methods~\cite{dpm_solver,dc_solver,unipc} aim to decrease the sampling steps by proposing a fast solver. This paper focuses on the training-free methods to accelerate FDMs~\cite{flow_for_generative,rectified_flow}.


On the one hand, since the diffusion models~\cite{sde,ddim,rectified_flow} can be mainly categorized as FDMs and diffusion probabilistic models (DPMs), several training-free accelerated methods~\cite{dc_solver,dpm_solver,unipc,deis,pseudo} have been proposed for DPMs. DPMs~\cite{sde,ddim} leverage a score function modeled by a neural network to formulate the probabilistic path. To reduce the NFE, DPM-solver~\cite{dpm_solver} first proposed an acceleration method by proposing a unique general solution for the ODE. UniPC~\cite{unipc} improved the DPM-solver, leveraging a unified predictor-corrector framework. DC-solver~\cite{dc_solver} further improved the UniPC by introducing a dynamic compensation strategy.



On the other hand, unlike the DPMs, FDMs are based on the conditional normalization flow~\cite{flow_for_generative}, which unifies a specific ODE to improve the training paradigm. Recently, the accelerating methods for FDMs have mainly focused on training-based methods. For example, the self-distillation~\cite{rectified_flow} has proposed to achieve one-step generation. The latest work is the bespoke Non-Stationary Solvers~\cite{non-station}, which proposed a light-aware training-based accelerating method. A few works have been proposed for training-free methods. Euler solver~\cite{rectified_flow} is a classic approach for ODE. Further, the Heun solver~\cite{diffusers} is proposed to improve the Euler solver to reduce the approximation errors. Both Euler and Heun solvers need many NFEs. For example, the Heun solver needs an additional function evaluation, which increases the NFE.

In this work, we propose the flow-solver, the novel, and training-free fast solver, to accelerate the sampling process of FDMs. The key insight is to reuse the previous steps to reduce the approximation errors. Since we can create a cache to collect the previous function evaluation results, thus we can also reduce the NFE. Concretely, the flow-solver first uses the previous steps to approximate the high-order derivatives of Taylor expansion of ODE, where the distances between the results of the previous steps and those of the current step are calculated. Then, a polynomial interpolation is used to solve the mismatch coefficients. Leveraging these allows us to make the previous steps to approximate the Taylor series of the velocity function in the next step. Our proposed flow-solver could more precisely approximate the continuous integral in the large time interval. The experimental results in CIFAR-10, CelebA-HQ, LSUN-Church, LSUN-Bedroom, ImageNet, and text-to-image conditional generation tasks show that our flow-solver makes a significant improvement for FDMs compared to the existing solvers, which prove the validity of our fast solver.

To sum up, our main contributions are:
\begin{itemize}
    \item We propose a novel flow-solver for FDMs, which can generate high-quality images compared to the existing solvers in fewer NFE.
    \item The flow-solver explores the connection between the previous steps and the high-order derivatives of Taylor expansion, which ensures the training-free acceleration for FDMs.
    \item We conduct extensive experiments on various generation tasks with three different FDMs.
    The experiments show that our flow-solver achieves the best performance.
\end{itemize}

\section{Related Work}
\label{sec:related_work}
\textbf{Accelerating methods for DPMs.} The reverse process of the DPMs could be represented in three ways: 1) \textit{Variance inference.}~\cite{ddim,nmdm,ddpm}, 2) \textit{Stochastic differential equation}. (SDE)~\cite{sde,gotta} and 3) \textit{ODE}~\cite{consistencytrajectorymodelslearning}. Hence, training-free acceleration for DPMs mainly focuses on how to leverage the property of each way. In the first way, based on the variance inference, the DDIM has been proposed~\cite{ddim} by changing the Markov chain in DPMs to a non-Markov structure with the deterministic process, thus accelerating the sampling process. 

For the second type, previous works~\cite{edm} focus on designing a better noise scheduler design for SDE  to improve the drift and diffusion coefficient factors. For example, EDM~\cite{edm,edm2} first proposed to scale the time interval from $[0,1]$ to $[80.0,0.002]$. AYS~\cite{align} proposed an optimized search strategy to design a better noise scheduler based on the SDE. Except for the improvement in the noise scheduler, Liu \textit{et al.}~\cite{pseudo} and DEIS~\cite{deis} proposed to re-formulate the SDE to be compatible with the block-box fast solver of SDE. 

For the third type, previous work on ODE focuses on exploring the connection hidden in the ODE with high-order derivatives~\cite{num_ode,genie}. Firstly, the DPM-solver proposed a unique general solution to enable high-order derivatives. Based on this, the DPM-solver also provided a solution to increase the solver order from $1$ to $p$, where $p$ is the highest order. Then, DPM-solver+=~\cite{dpm++} improved the DPM-solver by implementing the high-order solver from the noise to pixel space since there are fewer approximation errors for approximating high-order derivatives. 

DPM-solver offers a view that the high-order solver could accelerate the sampling process of DPMs. Thus, UniPC~\cite{unipc} introduced the predictor-corrector framework to further increase the order of the DPM-solver by introducing the previous results. Meanwhile, DC-solver~\cite{dc_solver} proposed a lightning-aware training strategy to improve the UniPC further.

Meanwhile, there are also works~\cite{consistencytrajectorymodelslearning,freeu,earlystop} that accelerate the sampling process based on the interesting findings. For example, Free-UNet~\cite{freeu} found some redundant steps that could be replaced by previous results, thus leveraging a cache to avoid additional calculating. ES-DDPM~\cite{earlystop} proposed to use the early-stop strategy to skip the redundant steps.

\textbf{Acceleration methods for FDMs.} FDMs unify to represent the probabilistic path as a specific ODE. This enables FDMs to achieve acceleration in a training-based way. For example, Rectified flow~\cite{rectified_flow} proposed that the Reflow operation could accelerate a straight probabilistic path. Then, Instaflow~\cite{instaflow} proposed to change the DPMs to FDMs first, then implementing Reflow. Neta~\cite{non-station} proposed the bespoke Non-Stationary Solvers, a training-based faster solver for both FDMs and DPMs by changing the FDMs to the DPMs. Due to the dramatic change in the representation of the probabilistic path in FDMs, the acceleration method of the ODE of the FDMs remains unexplored compared to the DPMs.

In this paper, we try to offer a new solution to achieve training-free acceleration for the FDMs. We formulate our acceleration method based on the previous steps. This simplifies a lot of complex factors. 
\section{Preliminary}
\label{sec:preliminary}
Flow diffusion models (FDMs) use a continuous time process, such as flow models with ODE~\cite{flow_for_generative,rectified_flow}, to connect the Gaussian probability paths between data distribution and the Gaussian noise distribution as follows:
\begin{equation}
    dx_{t} = v_{\theta}(x_{t},t)dt,
    \label{eq:flow_reverse}
\end{equation}
where $v_{\theta}(\ast,\ast)$ is the estimation modeled by a neural network for the velocity function. 

Now, we define the time interval $[1,0]$, where $t=1$ means the sample is the pure Gaussian noise and $t=0$ means the sample is back to the data distribution. Given $x_{1}$ sampled for the Gaussian noise distribution, we generate the $x_0$ from real data distribution as
\begin{equation}
    x_{0} = x_{1} + \int_{1}^{0}v_{\theta}(x_{s},s)ds,
    \label{eq:flow_general}
\end{equation}
There is a continuous integral that needs to be calculated. We first introduce two solvers for ODE.

\textbf{Euler solver.} The Euler solver is a classical solver for ODE. It first splits $[1,0]$ into the sub-sequence$[1,..., t_{n-1}, t_{n},...,0]$. Then, the Euler solver uses the iteration way to solve the ODE in each time step as:
\begin{equation}
    x_{t_{n}} = x_{t_{n-1}} + \int_{t_{n-1}}^{t_{n}} v_{\theta}(x_{s},s)ds.
    \label{eq:sub-sequance}
\end{equation}
Eq.~\ref{eq:sub-sequance} could be approximated as:
\begin{equation}
    x_{t_{n}} = x_{t_{n-1}} + h_{n}v_{\theta}(x_{t_{n-1}},t_{n-1}),
    \label{eq:euler1}
\end{equation}
where $h_{n}$ is the time interval between the sub-sequence of $t_{n-1}$ and $t_{n}$. 

\textbf{Heun solver.} The Heun solver uses $t_{n}$ and $t_{n-1}$ two points to reduce the approximation error in continuous integral.
It adds an additional function evaluation of $v_{\theta}(\ast,\ast)$ for the future step $t_{n}$ by leveraging the Euler solver to obtain an approximation $\hat{x}_{t_{n}}$ for $x_{t_{n}}$. The Heun solver improved the Euler solver as:
\begin{equation}
     x_{t_{n}} = x_{t_{n-1}} + \frac{1}{2}h_{n}(v_{\theta}(x_{t_{n-1}},x_{t_{n-1}})+v_{\theta}(\hat{x}_{t_{n}},t_{n})),
     \label{eq:heun}
\end{equation}
where
\begin{equation}
    \hat{x}_{t_{n}} = x_{t_{n-1}} + h_{n}v_{\theta}(x_{t_{n-1}},t_{n-1}).
\end{equation}

\textbf{Our motivation.} For Euler solver, the approximation errors of Eq.~\ref{eq:euler1} for Eq.~\ref{eq:sub-sequance} is $O(h_{n})$~\cite{rectified_flow}. This requires the Euler solver to keep the short time interval $h_{n}$, which naturally needs many time steps to generate high-quality images. Each time step needs to evaluate $v_{\theta}(\ast,\ast)$ once, which requires many NFE. 

The Heun solver leverages the results of the Euler solver, i.e., $\hat{x}_{t_{n-1}}$, to reduce the approximation errors for $x_{t_{n}}$ by two function evaluations of $v_{\theta}(\ast,\ast)$. This makes the approximation errors for Eq.~\ref{eq:sub-sequance} reduce to $O(h_{n}^{2})$~\cite{rectified_flow}, where $h_{n}<1$. Under the same $h_{n}$, the approximation errors for the Heun solver are always smaller than the Euler solver, thus improving the generation quality~\cite{dpm_solver}. But the Heun solver needs two NFE in each time step. 

Hence, given a new solver, if it could satisfy: 1) the better approximation error, e.g., $O(h_{n}^{p})$ where $p>1$, and 2) the one function evaluation in each time step, it could accelerate the ODE and generate high-quality images. For example, assume the approximation error bound is $O(0.25)$. The Euler solver needs $\text{NFE}=4$ by setting $h_{n}=0.25$ if we directly split the time interval to the equal length sub-interval. Please note that $[1,0]$ will be divided into four sub-intervals and $\text{NFE}=1$ for each sub-interval. 
Under the same time-interval splitting strategy, the Heun solver only needs two sub-intervals, but $\text{NFE}=2$ for each sub-interval. This leads to $\text{NFE}=4 \ (2 \times 2)$ for the Heun solver too. The new solver with $p=2$ only needs $\text{NFE}=2 \ (2 \times 1)$, where it needs two sub-intervals and $\text{NFE}=1$ for each sub-interval. In this condition, this new solver successfully reduces the NFE. To achieve these, different from the Heun solver, which uses the future step,  the previous steps are a good choice motivated by the DPMs.

\section{Methodology}
\label{sec:method}

In this paper, we propose our flow-solver. Assuming that the overall number of time steps is $T$, our flow-solver aims to achieve better approximation errors to $O(h_{n}^{p})$ and also keep $\text{NEF}=1$ in each time step, where $p$ is the number of previous steps. Concretely, motivated by the DPMs~\cite{dpm_solver}, we find an interesting formulation for the Eq.~\ref{eq:sub-sequance} by introducing the previous evaluation results of $v_{\theta}(\ast,\ast)$. Based on this, we could directly leverage the Taylor expansion to approximate the continued integral, where the approximation errors could be bounded at $O(h_{n}^{p})$ and only need one NEF in each step.

Concretely, we assume that $v_{\theta}(\ast,\ast)$ exists the $p$-th order derivatives. Then, we first leverage the Taylor expansion to connect the high-order derivatives and the continuous integral as the following lemma. 
\begin{lemma}
    The continuous integral in Eq.~\ref{eq:sub-sequance} could be solved by the high-order derivative of $v_{\theta}(\ast,\ast)$ as follows:
    \begin{equation}
            x_{t_{n}} = x_{t_{n-1}} + \sum_{i=0}^{p}C_i \frac{v^{i}_{\theta}(x_{t_{n-1},t_{n-1}})}{i!} + O(h_{n}^{p}), 
            \label{eq:integral}
    \end{equation}
    where $p$ is the order for the solver. $v_{\theta}(x_{t_{n-1},t_{n-1}})$ is the $i$-th order partial derivatives of $t$ for $v_{\theta}(x_{t_{n-1},t_{n-1}})$ at $t_{n-1}$, where $v_{\theta}^{0}(\ast,\ast) = v_{\theta}(\ast,\ast)$.
    $C_i$ is the factor and could be calculated as:
    \begin{equation}
        C_i = \frac{(t_{n}-t_{n-1})^{i+1}}{i+1}.
    \end{equation}
    \label{lemma:discrete}
\end{lemma}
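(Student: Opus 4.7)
The plan is to apply Taylor's theorem with remainder to the integrand $v_{\theta}(x_{s},s)$ viewed as a function of $s$ along the ODE trajectory, then integrate the resulting polynomial term by term and read off the coefficients $C_i$.

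First, I would set $\phi(s) := v_{\theta}(x_{s},s)$ and expand it around the left endpoint $s = t_{n-1}$. Because $x_s$ satisfies $dx_s/ds = v_{\theta}(x_s,s)$, the derivatives of $\phi$ at $t_{n-1}$ are well defined by iterated chain rule and are precisely the quantities written as $v^{i}_{\theta}(x_{t_{n-1}},t_{n-1})$ in the statement. Assuming $v_{\theta}$ is sufficiently smooth in both arguments, Taylor's theorem yields
\begin{equation*}
    \phi(s) \;=\; \sum_{i=0}^{p} \frac{\phi^{(i)}(t_{n-1})}{i!} (s-t_{n-1})^{i} \;+\; R_p(s),
\end{equation*}
with remainder $R_p(s) = O\bigl((s-t_{n-1})^{p+1}\bigr)$ uniformly on $[t_{n-1},t_n]$.

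Second, I would substitute this expansion into Eq.~\ref{eq:sub-sequance} and interchange the finite sum with the integral. Each monomial integrates as
\begin{equation*}
    \int_{t_{n-1}}^{t_{n}} (s-t_{n-1})^{i}\,ds \;=\; \frac{(t_{n}-t_{n-1})^{i+1}}{i+1},
\end{equation*}
which is exactly the factor $C_i$ in the statement, giving the claimed main sum. The remainder contributes $\int_{t_{n-1}}^{t_n} R_p(s)\,ds = O(h_n^{p+2})$, which is comfortably absorbed into the $O(h_n^{p})$ residual stated in the lemma.

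The main obstacle is not the arithmetic — it is just Taylor followed by polynomial integration — but rather pinning down what $v^{i}_{\theta}(x_{t_{n-1}},t_{n-1})$ denotes. The text calls these ``partial derivatives in $t$'', yet since the first argument $x_s$ itself evolves with $s$, the object that actually arises from Taylor expanding $\phi(s)$ is the \emph{total} derivative $d^{i}\phi/ds^{i}$ at $t_{n-1}$, computed via iterated chain rule (for instance $\phi'(s) = \partial_t v_{\theta} + v_{\theta}\!\cdot\!\nabla_{x} v_{\theta}$). I would need to fix the notation to this convention and assume $v_{\theta} \in C^{p+1}$ jointly in $(x,t)$ so that all derivatives involved exist and the remainder bound is valid; these are the only genuinely non-trivial prerequisites behind the statement.
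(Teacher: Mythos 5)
Your proposal is correct and follows essentially the same route as the paper's own proof: Taylor-expand the integrand $v_{\theta}(x_{s},s)$ around $s=t_{n-1}$ and integrate the monomials term by term to obtain $C_i=\frac{(t_n-t_{n-1})^{i+1}}{i+1}$. You are in fact slightly more careful than the paper on two points it glosses over --- tracking the remainder explicitly rather than setting the order to infinity, and noting that the coefficients arising from the expansion are total derivatives along the trajectory rather than partial derivatives in $t$ --- but the underlying argument is identical.
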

The detailed proofs are in the supplementary.
Lemma~\ref{lemma:discrete} indicates that the key is how to approximate the terms of the high-order derivative. Specifically, we split $i=0$ for the sum operator in Lemma~\ref{lemma:discrete} as:
\begin{equation}
        x_{t_{n}} = x_{t_{n-1}} + h_{n}v_{\theta}(x_{t_{n-1}},t_{n-1}) +\sum_{i=1}^{p}C_{i}v^{i}_{\theta}(x_{t_{n-1},t_{n-1}}).
\label{eq:target}
\end{equation}
Eq.~\ref{eq:target} is a general formulation. The Euler solver chooses to ignore the high-order derivatives term since by canceling the sum operator, i.e., setting $p=0$, $x_{t_{n-1}} + h_{n}v_{\theta}(x_{t_{n-1}},t_{n-1})$ is exactly the Euler solver. The Heun solver chooses to leverage the future steps. Our fast-solver tries to approximate high-order derivatives in Eq.~(\ref{eq:target}), that is to approximate the $\sum_{i=1}^{p}C_i v^{i}_{\theta}(x_{t_{n-1},t_{n-1}})$ by leveraging the previous steps.  

Formally, to use the other step, e.g. $x_{t_{n}}$, to approximate the high-order derivatives, we redefine $v_{\theta}(x_{t_{n}}, t_{n})$ as the approximation for $t_{n}$ at $t_{n-1}$ by implementing the Taylor expansion. We have:
\begin{equation}
    v_{\theta}(x_{t_{n}}, t_{n}) = \sum_{i=0}^{p}v^{i}_{\theta}(x_{t_{n-1}}, t_{n-1})\frac{(t_{n}-t_{n-1})^{i}}{i!},
\end{equation}
where $\frac{(t_{n}-t_{n-1})^{i}}{i!}$ is the parameter of Taylor expansion. Then, by splitting $i=0$ out of the sum operation, we have:

\begin{equation}
    v_{\theta}(x_{t_{n}}, t_{n}) - v_{\theta}(x_{t_{n-1}},t_{n-1}) = \\\sum_{i=1}^{p} a_{i}v^{i}_{\theta}(x_{t_{n-1}}, t_{n-1}),
    \label{eq:evi}
\end{equation}
where $a_{i} = \frac{(t_{n}-t_{n-1})^{i}}{i!}$. In this way, we could find that $v_{\theta}(x_{t_{n}}, t_{n}) - v_{\theta}(x_{t_{n-1}},t_{n-1})$ derives the high-order derivative we need in Eq.~\ref{eq:target}.

\textbf{Leveraging previous steps.} Since $t_{n}$ is the future step of the $t_{n-1}$, which is unknown, thus we use the previous steps to approximate the high-order order derivative in Eq.~\ref{eq:target}. We replace the $t_{n}$ to the previous step $t_{n-1-m}$, which could be defined as follows:
\begin{equation}
    D_{m} = v_{\theta}(x_{t_{n-1-m}}, t_{n-1-m}) - v_{\theta}(x_{t_{n-1}}, t_{n-1}),
    \label{eq:all_start_here}
\end{equation}
where $t_{n-1-m} < t_{n-1}$ is the previous $m$ steps related to the $t_{n-1}$. This also enables us to create a cache to save the function evaluation results of previous steps for $t_{n-1-m}$, and thus reduce the number of NFEs for each sub-interval.  

However, Eq.~\ref{eq:all_start_here} exists the mismatch of the coefficients, which hinders it from directly approximating the $p$-th order derivative. To overcome this, we introduces a factor $B_{m}$ for $D_{m}$ as $B_{m}D_{m}$. Suppose that we have previous $k$ steps, by Taylor expansion at $p$-th order w.r.t $t_{n-1}$ for $D_{m}$, we have:
\begin{equation}
\begin{split}
         \sum_{m=1}^{k}B_{m}D_{m} &\approx \sum_{m=1}^{k}B_{m}\sum_{i=1}^{p}\frac{(h_{n-1-m})^iv_{\theta}^{i}(x_{t_{n-1}},t_{n-1})}{i!} \\
         &= \sum_{i=1}^{p}\frac{v_{\theta}^{i}(x_{t_{n-1}},t_{n-1})}{i!}\sum_{m=1}^{k}B_{m}(h_{n-1-m})^i,
\end{split}
\end{equation}
where $h_{n-1-m} = t_{n-1-m}-t_{n-1}$. Now, by $\sum_{m=1}^{k}B_{m}(h_{n-1-m})^i = C_{i}$, we create a polynomial interpolation problem~\cite{unipc}. There is a closed-form only when $k=p$ (this is why we use multiple previous steps), meaning the highest $p$-th order derivative we could approximate equals the $k$. This also explains why the Heun solver could only approximate $1$-th order solver due to the mismatch coefficients when $p>1$. The flow-solver could solve it by the $p$ number of previous steps (i.e., $k=p$). In this condition, this polynomial interpolation problem has a closed-form based on the matrix $R_{p}$~\cite{unipc}. Therefore, the following lemma could be used to formulate our flow-solver based on the closed-form of the polynomial interpolation problem.
\begin{lemma}
   The flow-solver could be defined based on $B_{m}D_{m}$ as follows:
    \begin{equation}
        x_{t_{n}} = x_{t_{n-1}} + h_{n}v_{\theta}(x_{t_{n-1}},t_{n-1}) + \sum_{m=1}^{p} B_{m}D_{m} ,
        \label{eq:unic_flow}
    \end{equation}
    where $p$ is the number of the previous steps, which is equal to the $p$-th derivatives it could be estimated. The $B_{m}$ is the coefficient, which is equal to:
    \begin{gather}
        B_{m} = S_{p,m} \\
        S_{p} =
         R_{p}^{-1} \begin{bmatrix}
            C_{1}\\
            ...\\
            C_{p}
        \end{bmatrix}\\
        R_{p} = \begin{bmatrix}
             1&t_{n-2}-t_{n-1}  &...& (t_{n-2}-t_{n-1})^{p-1}\\
             1&t_{n-3}-t_{n-1} &...& (t_{n-3}-t_{n-1})^{p-1} \\
             ...&... &...& ... \\
            1&t_{n-1-p}-t_{n-1} &...& (t_{n-1-p}-t_{n-1})^{p-1}
        \end{bmatrix}.
    \end{gather}
    $S_{p,m}$ is the $m$-th row and $p$ is the number of rows since $p$-th derivatives generates the number of $p$ coefficients. 
    \label{lemma:correction term}
\end{lemma}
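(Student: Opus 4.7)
The proof is essentially an exercise in Taylor-series coefficient matching on top of Lemma~\ref{lemma:discrete}, and I would organize it as follows.

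First, I would invoke Lemma~\ref{lemma:discrete} to reduce the task to approximating the high-order derivative sum $\sum_{i=1}^{p} C_i\, v^{i}_{\theta}(x_{t_{n-1}}, t_{n-1})/i!$ in Eq.~\ref{eq:target} using only the cached evaluations $v_{\theta}(x_{t_{n-1-m}}, t_{n-1-m})$ for $m = 1, \dots, p$. The leading $h_n v_\theta(x_{t_{n-1}}, t_{n-1})$ term is already present in Eq.~\ref{eq:unic_flow}, so all that remains is to produce the correction $\sum_{m=1}^{p} B_m D_m$ that reconstructs the higher-order contribution.

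Second, I would perform a Taylor expansion of $v_{\theta}(x_{t_{n-1-m}}, t_{n-1-m})$ about $t_{n-1}$ in the (negative) step $h_{n-1-m} = t_{n-1-m} - t_{n-1}$, exactly in the spirit of Eq.~\ref{eq:evi} but with the unknown future index $t_n$ replaced by a cached past index. Subtracting $v_{\theta}(x_{t_{n-1}}, t_{n-1})$ kills the zeroth-order term and yields
\[
D_m \;=\; \sum_{i=1}^{p} \frac{(h_{n-1-m})^{i}}{i!}\, v^{i}_{\theta}(x_{t_{n-1}}, t_{n-1}) \;+\; O\bigl(h_{n-1-m}^{\,p+1}\bigr).
\]
Multiplying by $B_m$, summing over $m$, and swapping the order of summation isolates the coefficient in front of $v^{i}_{\theta}(x_{t_{n-1}}, t_{n-1})/i!$ as $\sum_{m=1}^{p} B_m (h_{n-1-m})^{i}$.

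Third, to force this linear combination to coincide with the target derivative sum, I would impose the coefficient-matching conditions $\sum_{m=1}^{p} B_m (h_{n-1-m})^{i} = C_{i}$ for each $i = 1, \dots, p$. This is a square system of $p$ equations in $p$ unknowns whose coefficient matrix is (up to transposition and a reindexing of columns) exactly the Vandermonde-type matrix $R_p$ defined in the statement. Because the nodes $t_{n-1-m} - t_{n-1}$ are pairwise distinct (the time grid has no repeated steps), the classical Vandermonde determinant guarantees that $R_p$ is invertible, so the unique solution is $[B_1, \dots, B_p]^{\top} = R_p^{-1}[C_1, \dots, C_p]^{\top}$, i.e.\ $B_m = S_{p,m}$ as claimed. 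Substituting back recovers Eq.~\ref{eq:unic_flow}.

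The step I expect to be most delicate is the error bookkeeping rather than the algebra: the final residual combines the $O(h_n^{p})$ remainder inherited from Lemma~\ref{lemma:discrete} with the $p$ individual $O(h_{n-1-m}^{\,p+1})$ remainders that are each scaled by a $B_m$. As long as the previous step sizes are comparable to $h_n$ this collapses to $O(h_n^{p})$, but one must check that $R_p^{-1}$ does not blow up and amplify the remainders; this reduces to a geometric condition on the time grid and is the only nontrivial estimate in the argument.
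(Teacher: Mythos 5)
Your proposal follows essentially the same route as the paper's proof: Taylor-expand each cached evaluation about $t_{n-1}$ so that $D_m = \sum_{i=1}^{p} v^{i}_{\theta}(x_{t_{n-1}},t_{n-1})(h_{n-1-m})^{i}/i!$ plus a remainder, swap the order of summation, impose $\sum_{m} B_m (h_{n-1-m})^{i} = C_i$, and invert the resulting Vandermonde system (whose nonsingularity follows from the distinctness of the time nodes). Your closing remarks on tracking the $O(h_{n-1-m}^{\,p+1})$ remainders and the conditioning of $R_p^{-1}$ are actually more careful than the paper, which simply asserts the $O(h_n^{p})$ bound without this bookkeeping.
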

The detailed proof is in the supplementary. As discussed below, the flow-solver could approximate the $p$-th order derivative and need only one NFE in each step.

\textbf{One function evaluation:} Lemma~\ref{lemma:correction term} leverages the previous $p$ steps results of $v_{\theta}(\ast,\ast)$ to replace the further step $t_{n}$. In this way, we could create a simple cache to collect the $p$ number of the previous results of the function evaluations. Leveraging the cache, we could approximate $t_{n}$ step based on the $t_{n-1}$ with only querying the $v_{\theta}(\ast,\ast)$ once for $v_{\theta}(x_{t_{n-1}},t_{n-1})$.

\textbf{Approximation error:}
Lemma~\ref{lemma:correction term} enables us to formulate a $p$-order solver since we could introduce the number of $p$ previous steps to estimate the $p$-th order derivative.
In the end, we have the following lemma to show the bound approximation errors.
\begin{lemma}
    Given the $p$-th flow-solver in Lemma~\ref{lemma:correction term} at $t_{n-1}$ timesteps, the approximation errors for the next step $t_{n}$ are equal to $O(h_{n}^{p})$.
    \label{lemma:approximation}
\end{lemma}
The detailed proof is in the supplementary. Lemma~\ref{lemma:approximation} bound the approximation errors to the $O(h_{n}^{p})$, which is similar to the Heun solver. Assuming we have $T$ time steps, the difference is $\text{NFE}=T$ for the flow-solver instead of $\text{NFE}=2T$ for the Heun solver. In this condition, we successfully achieve the acceleration for the sampling process.  
Meanwhile, there is no need to calculate the future step. Since $h_{n}<1$, Lemma~\ref{lemma:approximation} also proves that the flow-solver has fewer approximation errors.

\begin{algorithm}
    \caption{The algorithm for flow-solver} \label{al:flow-solver}
    \begin{algorithmic}[1]
     \Statex \textbf{Input:} $v_{\theta}(\ast,\ast)$, a buffer $\text{buff} = \{ v_{\theta}(x_{t_{n-m}},t_{n-m})\}^{p}_{m=2}$, $x_{t_{n-1}}$.
     \Statex \textbf{Output:} $x_{t_{n}}$.
        \State Trick $\leftarrow$ True. \Comment{Implement trick}
        \State Calculate $v_{\theta}(x_{t_{n-1}},t_{n-1})$.
        \State Calculate $B_{m}$. \Comment{Lemma ~\ref{lemma:correction term}}
        \If{Trick}
            \State Push $v_{\theta}(x_{t_{n-1}},t_{n-1})$ into buff.
            \State Calculate $D_{m}$. \Comment{Lemma ~\ref{lemma:correction term}}
            \State $x_{t_{n-1}} \leftarrow x_{t_{n-1}} + \sum_{m=1}^{p}B_{m}D_{m}$.
            \State Pop $v_{\theta}(x_{t_{n-1}},t_{n-1})$ out buff.
        \EndIf
        \State Calculate $D_{m}$. \Comment{Lemma ~\ref{lemma:correction term}}
        \State $x_{t_{n}} \leftarrow x_{t_{n-1}} + h_{n}v_{\theta}(x_{t_{n-1}},t_{n-1}) + \sum_{m=1}^{p-1}B_{m}D_{m}$.
    \Statex\textbf{Return:} $x_{t_{n}}$.
    \end{algorithmic}
\end{algorithm}

\subsection{Increasing the solver order} In the end, we show a trick in DPMs~\cite{unipc} to further increase the approximation errors for the flow-solver to $O(h_{n}^{p+1})$ called the predictor-corrector trick, which is equal to introduce $p+1$ steps. This could save little memory and improve the generation quality. Concretely, after obtaining $x_{t_{n}}$, the result of $v_{\theta}(x_{t_{n}},t_{n})$ could be put into the Lemma~\ref{lemma:correction term}. In this way, we could leverage $v_{\theta}(x_{t_{n}},t_{n})$ to get a more precise $x_{t_{n-1}}$, which will benefit the prediction for $x_{t_{n+1}}$ and has the following lemma.
\begin{lemma}
    Given the $p$-th flow-solver at $t_{n-1}$ timesteps, the approximation errors for the next step $t_{n}$ are increasing from $O(h_{n}^{p})$ to $O(h_{n}^{p+1})$ by introducing predictor-corrector trick.
    \label{lemma:approximation_trick}
\end{lemma}
The detailed proof is in the supplementary. Algorithm~\ref{al:flow-solver} summarized the overall process for flow-solver. To sum up, we propose a novel flow-solver for FDMs. Our flow-solver derives from a unique formulation of the connection among high-order derivatives, the Euler and Heun solvers. The flow-solver could approximate the high-order derivatives by formulating the $D_{m}$ by introducing the previous $p$ steps results to replace the further step, thus avoiding additional query for $v_{\theta}(\ast,\ast)$. The approximation errors of the flow-solver are at least bound at $O(h_{n}^{p})$. Thus, the flow-solver enables the large time step while alleviating the approximation errors.

\textbf{A unified perspective of Euler solver.} Our flow-solver also provides a new perspective of the Euler solver. When $p=1$, our solver from Lemma~\ref{lemma:correction term} is:
\begin{equation}
     x_{t_{n}} = x_{t_{n-1}} + h_{n}v_{\theta}(x_{t_{n-1}},t_{n-1}).
     \label{eq:euler}
\end{equation}
Eq.~\ref{eq:euler} is exactly the Euler solver for FDMs~\cite{rectified_flow}, further proving our solver's validity and extensibility.

\textbf{Remark.} Compared to the fast solvers in DPMs, e.g., DPM-solver~\cite{dpm_solver,unipc}, the common point is that we both leverage the previous steps to approximate the next step precisely. The difference is how to match the coefficients for the high-order derivative part. Flow-solver changes it to a polynomial interpolation problem, and flow-solver is formulated based on the closed-form, alleviating the additional computation errors for additional factors. This enables the flow-solver to be used for pixel space and latent space-based FDMs.

\section{Experimental Results}
\label{sec:experiment}
\subsection{Implementation Details}
\textbf{Unconditional generation.} We first test our fast solver on the unconditional generation on the CIFAR-10, CelebA-HQ, LSUN-Church, and LSUN-Bedroom. The resolutions of generated images include $32 \times 32$ and $256\times256$. Following the previous works~\cite{dpm_solver}, we use Fréchet inception distance (FID) to measure the quality of the generated images. To calculate the FID, we generate 30K images for each dataset.  

\textbf{Conditional generation.} We test our fast solver on the conditional generation tasks based on the classifier-free guidance~\cite{classifier-free}. We first test the label-based condition on ImageNet~\cite{ImageNet}, where the resolution is $256\times256$. We use FID to measure the quality by generating the 30K images. The strength for classifier-free guidance (CFG) is set to the default setting~\cite{sit}.

Then, we further test our fast solver on the real text-to-image generation task. We randomly choose the prompts from PartiPrompts~\cite{freedom} and generate the 10K images with $512\times512$ resolution. The ground truth images are generated using the Euler solver with $\text{NFE}=100$. We measure the generation quality based on the FID between the ground truth and the generated images.

\textbf{Pre-trained FDMs.} We use the Rectified flow~\cite{rectified_flow} from the official checkpoint for CIFAR-10, CelebA-HQ, LSUN-Church, and LSUN-Bedroom, where the models are pre-trained in the pixel space. We use the SiT-XL~\cite{sit} for ImageNet, where the model is pre-trained in the latent space. The Stable Diffusion 3.0~\cite{sd3.0} is used from the official checkpoint. All the experiments are run on a single RTX 4090 GPU.

\textbf{Baselines.} Following the previous works~\cite{dpm_solver}, we choose 
the Euler solver, the Heun solver~\cite{sd3.0}, and the RK-3, a black-box ODE solver (RK-3)~\cite{sde} as the baselines. To keep a fair comparison, we ensure that all the baselines and the flow-solver have the same NEFs. 

\subsection{Qualitative Results}
\begin{table}
    \centering
    \caption{Unconditional generation results on CIFAR-10. We compared the FID $\downarrow$ among different sampling solvers for FDMs with different NEFs.}
    \begin{tabular}{ccccc}
    \hline \multirow{2}{*}{ Sampling Method } & \multicolumn{4}{c}{ NFE } \\
    \cline { 2 - 5 } & 7 & 8 & 9 & 10 \\
    \hline Euler solver& 23.43 & 19.23 & 16.07 & 13.79 \\
    Heun solver&  46.35 &  40.07 &  33.44 &  30.03 \\
    RK-3 & 175.03 & 91.12 &83.33 & 77.37 \\
    Flow-solver (our)  & \textbf{8.93} & \textbf{7.70}& \textbf{7.27}& \textbf{6.62}\\
    \hline
    \end{tabular}
    \label{tab:cifar}
\end{table}

\begin{table}
    \centering
    \caption{Unconditional generation results on CelebA-HQ. We compared the FID $\downarrow$ among different sampling solvers for FDMs with different NEFs.}
    \begin{tabular}{ccccc}
    \hline \multirow{2}{*}{ Sampling Method } & \multicolumn{4}{c}{ NFE } \\
    \cline { 2 - 5 } & 7 & 8 & 9 & 10 \\
    \hline Euler solver& 113.13 & 106.87 & 100.76 & 96.54 \\
        Heun solver&  291.38 &  285.76 &  287.78 &  292.55 \\
    RK-3 &   307.31 & 290.22 &  281.37 &   279.09 \\
    Flow-solver (our)  &   \textbf{81.14} & \textbf{74.86} &  \textbf{70.82} & \textbf{67.28} \\
    \hline
    \end{tabular}
    \label{tab:celehq}
\end{table}

\begin{table}
    \centering
    \caption{Unconditional generation results on LSUN-Bedroom. We compared the FID $\downarrow$ among different sampling solvers for FDMs with different NEFs.}
    \begin{tabular}{ccccc}
    \hline \multirow{2}{*}{ Sampling Method } & \multicolumn{4}{c}{ NFE } \\
    \cline { 2 - 5 } & 7 & 8 & 9 & 10 \\
    \hline Euler solver&  43.49 &  35.38 & 30.36 & 26.48 \\
        Heun solver&    69.59 &  59.09 &  49.15 &  42.46 \\
    RK-3 &    303.26 & 363.54 &  370.30 &   359.14 \\
    Flow-solver (our)  & \textbf{16.20} &  \textbf{14.59} & \textbf{14.13} & \textbf{13.80} \\
    \hline
    \end{tabular}
    \label{tab:lsun-bedroom}
\end{table}

\begin{table}
    \centering
    \caption{Unconditional generation results on LSUN-Church. We compared the FID $\downarrow$ among different sampling solvers for FDMs with different NEFs.}
    \begin{tabular}{ccccc}
    \hline \multirow{2}{*}{ Sampling Method } & \multicolumn{4}{c}{ NFE } \\
    \cline { 2 - 5 } & 7 & 8 & 9 & 10 \\
    \hline Euler solver&  84.47 &   67.37 & 55.64 & 46.64 \\
        Heun solver&  128.18 &  102.99 &  92.77 &  83.58\\
    RK-3 &  333.26 &  290.12 &   304.24 &   284.01 \\
    Flow-solver (our)  & \textbf{ 34.32} &  \textbf{ 27.33} & \textbf{22.75} & \textbf{19.49} \\
    \hline
    \end{tabular}
    \label{tab:lsun-church}
\end{table}

\begin{table}
    \centering
    \caption{Conditional generation results on ImageNet. We compared the FID $\downarrow$ among different sampling solvers for FDMs with different NEFs.}
    \begin{tabular}{ccccc}
    \hline \multirow{2}{*}{ Sampling Method } & \multicolumn{4}{c}{ NFE } \\
    \cline { 2 - 5 } & 7 & 8 & 9 & 10 \\
    \hline Euler solver&  84.16 &  73.86 & 67.07 &  61.55 \\
    Heun solver&  115.90 &  102.99 &  92.77 &  83.58 \\
    RK-3 &   96.44 & 96.33 &  97.77 &   103.27 \\
    Flow-solver (our)  & \textbf{44.91} & \textbf{42.09} & \textbf{39.69} & \textbf{38.29} \\
    \hline
    \end{tabular}
    \label{tab:imagenet}
\end{table}

\textbf{Unconditional generation.} We first prove the efficiency of the flow-solver in the unconditional generation. Based on the four datasets, we report the comprehensive qualitative results with different numbers of NFEs shown in Table~\ref{tab:cifar}-\ref{tab:lsun-church}. Concretely, under the CIFAR-10, the flow-solver improves $51.05\%$, $54.39\%$, $59.38\%$, and $62.69\%$ FID with $\text{NEF}=7$, $\text{NEF}=8$, $\text{NEF}=9$, and $\text{NEF}=10$ compared to the SOTA baselines. Compared to the Euler solver, the flow-solver performs under $\text{NEF}=7$ better than the Euler solver under $\text{NEF}=10$. Then, compared to the Heun solver and RK-3, the flow-solver greatly improves under fewer NEF. Specifically, compared to the RK-3, the black-box high-order solver, the flow-solver significantly improves the FID and maintains high-quality generation. This verifies the potential of the flow-solver.
\begin{figure*}
    \centering
    \begin{subfigure}[b]{0.3\textwidth}
    \includegraphics[width=\textwidth,scale=1.1]{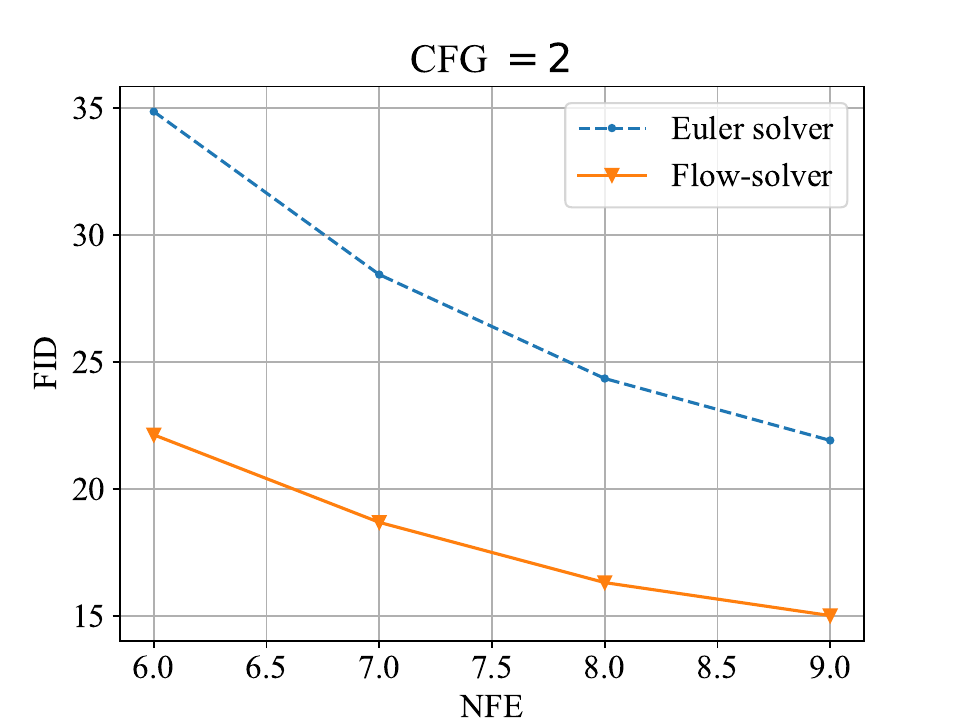}
    \label{fig:1}
    \end{subfigure}
    \begin{subfigure}[b]{0.3\textwidth}
    \includegraphics[width=\textwidth,scale=1.1]{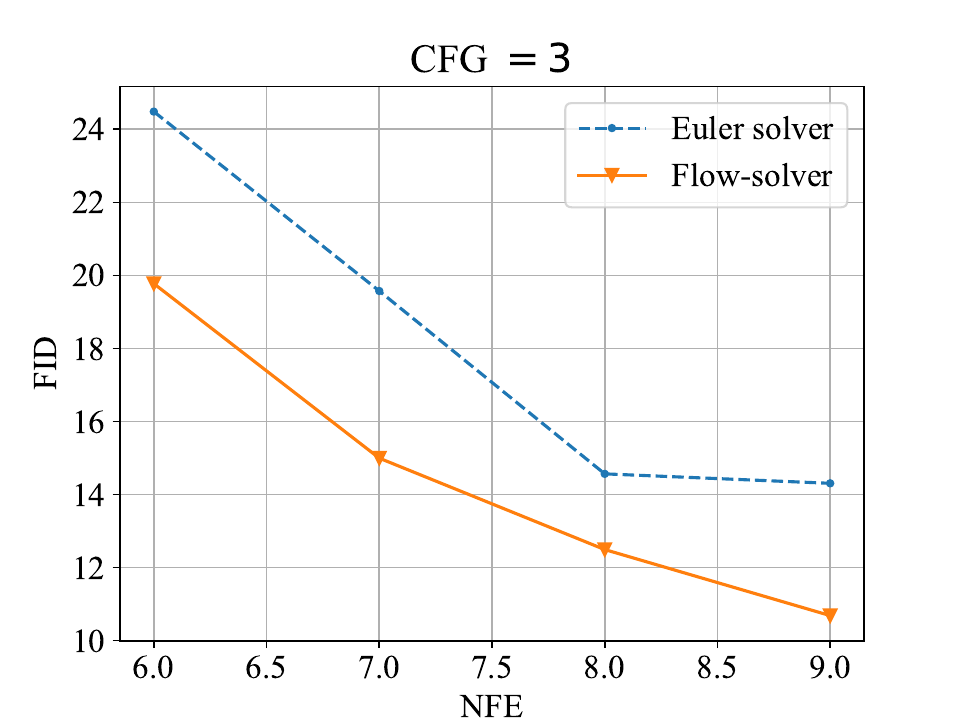}
    \label{fig:1}
    \end{subfigure}
    \begin{subfigure}[b]{0.3\textwidth}
    \includegraphics[width=\textwidth,scale=1.1]{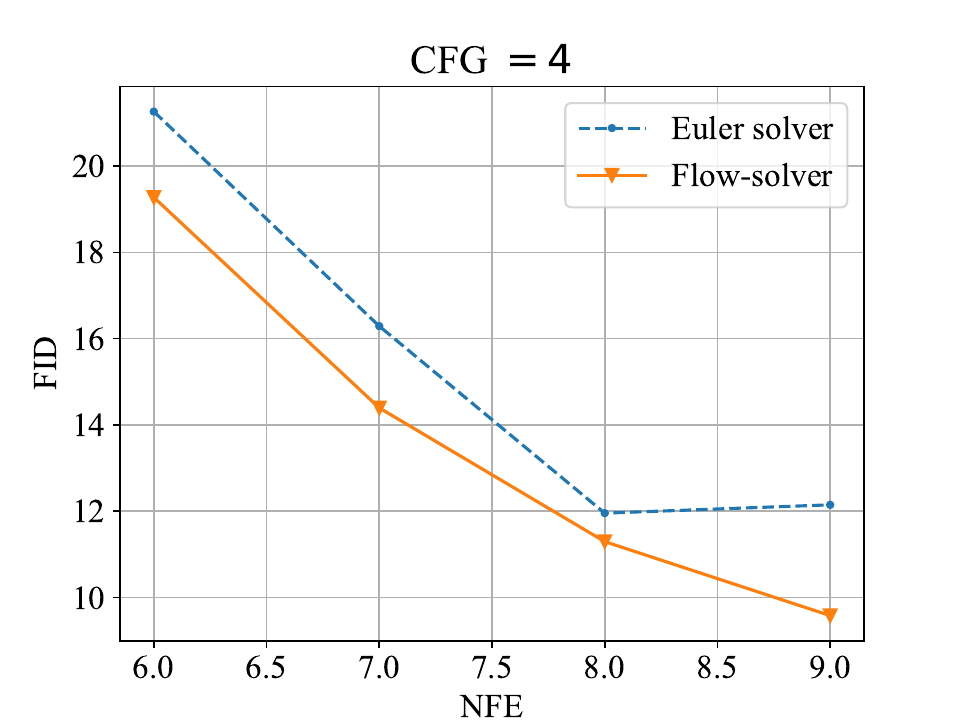}
    \label{fig:1}
    \end{subfigure}
    \caption{Conditional generation results on real text-to-image generations. We compared the FID $\downarrow$ among different sampling solvers for FDMs with different NEFs. We also report the influence of the different strengths of classifier-free guidance (CFG).}
    \label{fig:cfg}
\end{figure*}

We then further verify the efficiency of the flow-solver by tacking the high-resolution datasets and increasing the resolution from $32 \times 32$ to $256 \times 256$. Specifically, under CelebA-HQ, the flow-solver keeps the best performance. Compared to the Euler solver, the flow-solver improves an average of $29.56\%$ FID within different NEFs. Then, it can be noticed that the Heun solver and RK-3, two high-order baselines, show an unstable generation while the FID dramatically increases. The flow-solver avoids unstable generation due to the coefficient match, thus further proving its efficiency and validity.

To further explore the efficiency of the flow-solver and ensure stable generation, we report the additional datasets: LSUN-Bedroom and LSUN-Church with $256 \times 256$ resolution. Concretely, the flow-solver improves by an average $55.71\%$, and $59.03\%$ FID under LSUN-Bedroom and LSUN-Church, respectively, compared to the SOTA baseline, which first proves the efficiency of the flow-solver. The unstable generation happens for high-order solvers such as RK-3, and the flow-solver still works well. These results demonstrate the validity of the flow-solver.

To sum up, the flow-solver significantly improves FID in CIFAR-10, CelebA-HQ, LSUN-Bedroom, and LSUN-Church. Since the SOTA baseline is the Euler solver ( This also explains why the Euler solver is the main paradigm for the FDMs), we report qualitative examples in LSUN-Bedroom and LSUN-Church, as shown in Fig.~\ref{fig:uncondition} to further show how improvement made by the flow-solver. It can be noticed that the flow-solver could generate higher-quality images than the Euler solver. These results prove the efficiency of the flow-solver. Additionally, the models for three datasets are trained in the pixel space from resolution $32\times32$ to $256\times256$, proving the flow-solver's validity in pixel space and is compatible under different resolutions. In the end, the unstable generation that happened in the high-order solvers of the baseline is overcome by the flow-solver, which further proves the validity of the flow-solver.
\begin{figure*}
    \centering
    \includegraphics[width=\textwidth]{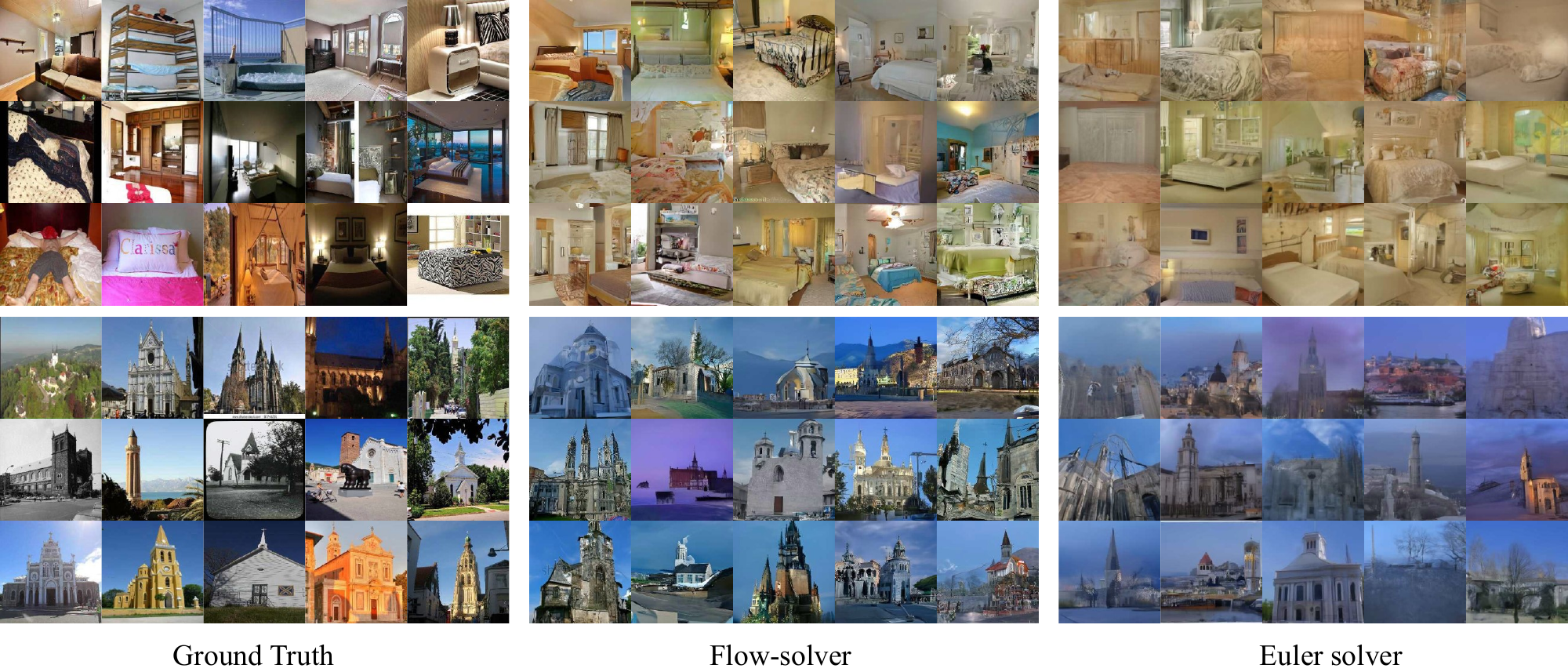}
    \caption{Qualitative examples for the unconditional generation. We mainly report the results on LSUN-Bedroom and LSUN-Church under with $\text{NEF}=10$.}
    \label{fig:uncondition}
\end{figure*}

\textbf{Conditional generation.} To prove the efficiency of the flow-solver on the conditional generation. We test its performance under the class label and text prompt conditions. Firstly, we report the detailed experiential results for the class label-based conditional generation tasks on ImageNet shown in Table~\ref{tab:imagenet}. Specifically, the flow-solver increases the FID from 84.16, 73.86, 67.07, and 61.55 to 44.91, 42.09, 39.69, and 38.29 with the rising NEF from 7 to 10, respectively, compared to the SOTA baseline. This improvement first proves the efficiency of the flow-solver. Then, we could find that the high-order solvers do not show unstable generation, and the flow-solver achieves the SOTA performance compared to them. These results prove the validity of the flow-solver.  
\begin{figure*}
    \centering
    \includegraphics[width=\textwidth]{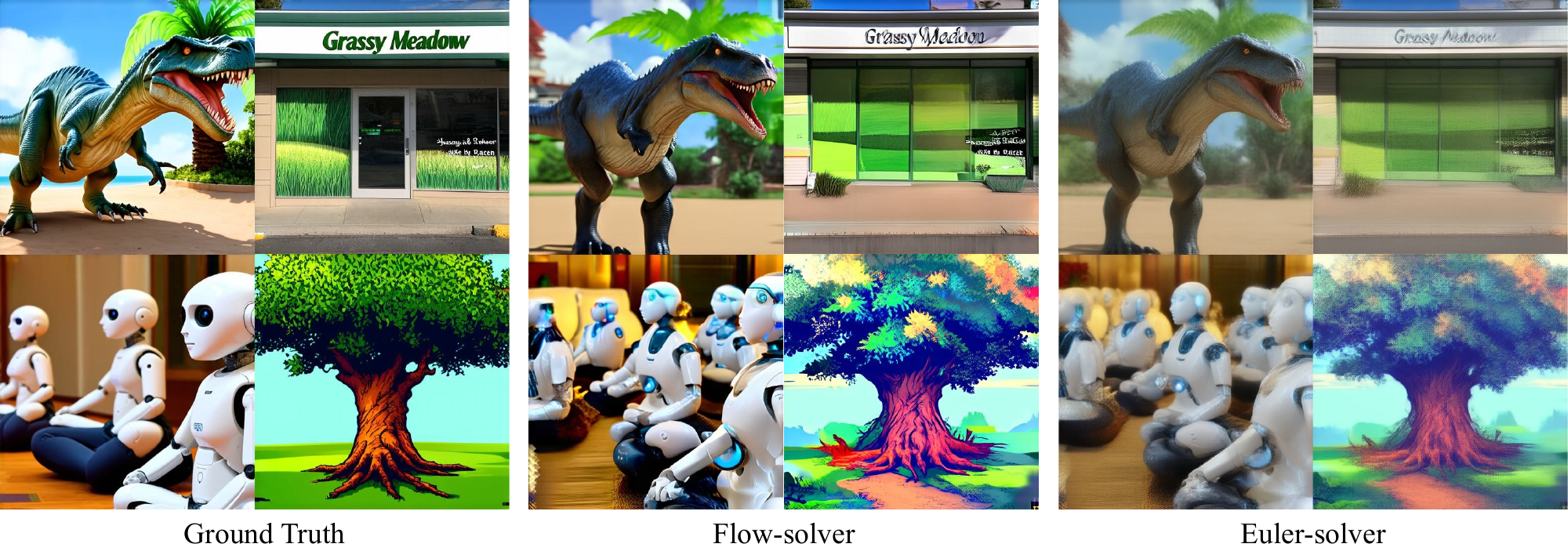}
    \caption{Qualitative examples for the conditional generation. We mainly report the text-to-image generation under different prompts with $\text{NEF}=6$ and $\text{CFG}=2$.}
    \label{fig:t2i_results}
\end{figure*}

We then implement the flow-solver to real text-to-image generation tasks based on Stable Diffusion 3.0 to explore its potential. We report the FID among different combinations of NFE and CFG. The flow-solver achieves the SOTA performance under $\text{NFE} \in [6,7,8,9]$ and $\text{CFG} \in [2,3,4]$ shown in Fig.~\ref{fig:cfg}. Concretely, with $\text{CFG}=2$, the flow-solver significantly improves FID compared to the Euler solver. Then, with the increase of the CFG from 2 to 4, the flow-solver still outperforms the Euler solver in different combinations between CFG and NFE. Then, in all combinations between NEF and CFG, the flow-solver outperforms the Heun solver. These results show that the efficiency of the flow-solver. Meanwhile, with the CFG increase, the flow-solver's FID does not show dramatic increases. This shows that the flow-solver alleviates the unstable generation under high CFG and further proves the validity of the flow-solver.

To sum up, the flow-solver achieves the SOTA performance on the conditional generation. The FDMs we used are pre-trained in the latent space. These results prove that the flow-solver could also work well under the latent space. Meanwhile, the resolution of text-to-image generation is further increased to $512\times 512$. The flow-solver does not show the unstable phenomenon in the CelebA-HQ, which proves the validity of the flow-solver. As the mainstream solver, the Euler solver still outperforms the Heun solver. To further explore the potential of the flow-solver compared to the Euler solver, we report the additional qualitative examples shown in Fig.~\ref{fig:t2i_results}. It can be found that the flow-solver could generate images with more detail under different prompts than the Euler solver. This is why the flow-solver could increase the FID and improve the generation quality.

\subsection{Ablation Study}
\textbf{Influence of $p$.} We make an ablation study to explore the influence of the order $p$ (i.e., the previous steps we have introduced). The comparison baselines include: 1) ($p=1$), which means we introduce no previous steps. 2) ($p=2, \text{w/o}$) and ($p=3, \text{w/o}$), where we drop the corrector trick since the corrector trick could further increase the order to $p+1$. This aims to prove that increasing the order could benefit the quality of generation. 3) ($p=2$) with corrector trick. The corrector trick could increase the order to $p+1$. This aims to prove that the corrector trick could correctly increase the order. 

We report the experimental results in Table~\ref{tab:albation_increasing}. Specifically, we could find that the flow-solver ($p=3, \text{w/o}$) outperforms the flow-solver ($p=2, \text{w/o}$), which proves that introducing the previous steps could benefit the generation quality. Then, the flow-solver ($p=2$) performs close to the flow-solver ($p=3, \text{w/o}$), proving the corrector trick's validity. Meanwhile, the flow-solver ($p=2$) even improves slightly compared to the flow-solver ($p=3, \text{w/o}$). This is because we ignore the inherent approximation errors from the model itself, which is also verified in DPMs~\cite{unipc}. Thus, to avoid introducing additional approximation errors inherent in the model, we use the flow-solver ($p=2$) with the corrector trick in all experiments. 

\begin{table}
    \centering
    \caption{Ablation study of $p$ on CIFAR-10. We compared the FID $\downarrow$ among different orders for flow-solver with different NEFs, where ($p=i$) means that we introduce previous $i$ steps. w/o means not using the corrector trick.}
    \begin{tabular}{ccccc}
    \hline \multirow{2}{*}{ Sampling Method } & \multicolumn{4}{c}{ NFE } \\
    \cline { 2 - 5 } & 7 & 8 & 9 & 10 \\
    \hline  Flow-solver ($p=1$)& 23.43 & 19.23 & 16.07 & 13.79 \\
    Flow-solver ($p=2$, w/o) & 15.32 & 12.69 & 11.26 &  9.69\\
    Flow-solver ($p=3$, w/o) & \textbf{8.74} & 7.81 &  7.33 & 6.75 \\
    Flow-solver ($p=2$) & 8.93 & \textbf{7.70}& \textbf{7.27}& \textbf{6.62}\\
    \hline
    \end{tabular}
    \label{tab:albation_increasing}
\end{table}

\section{Conclusion}
\label{sec:conslution}
In this paper, we proposed a novel training-free acceleration method for FDMs called flow-solver. The flow-solver is derived from a special formulation from the connection among the high-order derivatives, the Euler solver and the Heun solver. This enables the flow-solver to approximate the continuous integral more precisely by introducing the results of previous steps to avoid querying the model. The experimental results show that the flow-solver outperforms the Euler solve in the conditional and unconditional generation tasks. Meanwhile, the results also show that the flow-solver could work on different resolutions, pixel space-based FDMs, and latent space-based FDMs.

\textbf{Limitations.} As we reported in the ablation study, there are two limitations for the flow-solver except for the large improvement: 1) The inherent approximation errors of models will influence the generation quality. This further hinders the flow-solver from introducing more steps in time to approximate the high-order derivatives as much as possible. This could be improved by increasing the scaler of the FMs. 2) The CFG will increase the inherent approximation errors during the generation process~\cite{classifier-free}. We show more qualitative examples in the supplementary. These defects remain for further work to explore.

\bibliographystyle{unsrt}  
\bibliography{main}  

\begin{thebibliography}{10}

\bibitem{flow_for_generative}
Yaron Lipman, Ricky T.~Q. Chen, Heli Ben-Hamu, Maximilian Nickel, and Matthew Le.
\newblock Flow matching for generative modeling.
\newblock In {\em The Eleventh International Conference on Learning Representations}, 2023.

\bibitem{rectified_flow}
Xingchao Liu, Chengyue Gong, and Qiang Liu.
\newblock Flow straight and fast: Learning to generate and transfer data with rectified flow.
\newblock {\em arXiv preprint arXiv:2209.03003}, 2022.

\bibitem{sde}
Yang Song, Jascha Sohl-Dickstein, Diederik~P Kingma, Abhishek Kumar, Stefano Ermon, and Ben Poole.
\newblock Score-based generative modeling through stochastic differential equations.
\newblock In {\em International Conference on Learning Representations}, 2021.

\bibitem{consistency_model}
Yang Song, Prafulla Dhariwal, Mark Chen, and Ilya Sutskever.
\newblock Consistency models, 2023.

\bibitem{freeu}
Chenyang Si, Ziqi Huang, Yuming Jiang, and Ziwei Liu.
\newblock Freeu: Free lunch in diffusion u-net, 2023.

\bibitem{fast5}
Zhenyu Zhou, Defang Chen, Can Wang, and Chun Chen.
\newblock Fast ode-based sampling for diffusion models in around 5 steps, 2024.

\bibitem{dpm_solver}
Cheng Lu, Yuhao Zhou, Fan Bao, Jianfei Chen, Chongxuan Li, and Jun Zhu.
\newblock Dpm-solver: A fast ode solver for diffusion probabilistic model sampling in around 10 steps.
\newblock {\em arXiv preprint arXiv:2206.00927}, 2022.

\bibitem{dc_solver}
Wenliang Zhao, Haolin Wang, Jie Zhou, and Jiwen Lu.
\newblock Dc-solver: Improving predictor-corrector diffusion sampler via dynamic compensation, 2024.

\bibitem{unipc}
Wenliang Zhao, Lujia Bai, Yongming Rao, Jie Zhou, and Jiwen Lu.
\newblock Unipc: A unified predictor-corrector framework for fast sampling of diffusion models, 2023.

\bibitem{ddim}
Jiaming Song, Chenlin Meng, and Stefano Ermon.
\newblock Denoising diffusion implicit models.
\newblock {\em arXiv:2010.02502}, October 2020.

\bibitem{deis}
Qinsheng Zhang and Yongxin Chen.
\newblock Fast sampling of diffusion models with exponential integrator.
\newblock {\em arXiv preprint arXiv:2204.13902}, 2022.

\bibitem{pseudo}
Luping Liu, Yi~Ren, Zhijie Lin, and Zhou Zhao.
\newblock Pseudo numerical methods for diffusion models on manifolds.
\newblock In {\em International Conference on Learning Representations}, 2022.

\bibitem{non-station}
Neta Shaul, Uriel Singer, Ricky T.~Q. Chen, Matthew Le, Ali Thabet, Albert Pumarola, and Yaron Lipman.
\newblock Bespoke non-stationary solvers for fast sampling of diffusion and flow models, 2024.

\bibitem{diffusers}
Patrick von Platen, Suraj Patil, Anton Lozhkov, Pedro Cuenca, Nathan Lambert, Kashif Rasul, Mishig Davaadorj, Dhruv Nair, Sayak Paul, William Berman, Yiyi Xu, Steven Liu, and Thomas Wolf.
\newblock Diffusers: State-of-the-art diffusion models, 2022.

\bibitem{nmdm}
Zixiang Chen, Huizhuo Yuan, Yongqian Li, Yiwen Kou, Junkai Zhang, and Quanquan Gu.
\newblock Fast sampling via discrete non-markov diffusion models, 2024.

\bibitem{ddpm}
Jonathan Ho, Ajay Jain, and Pieter Abbeel.
\newblock Denoising diffusion probabilistic models, 2020.

\bibitem{gotta}
Alexia Jolicoeur-Martineau, Ke~Li, Rémi Piché-Taillefer, Tal Kachman, and Ioannis Mitliagkas.
\newblock Gotta go fast when generating data with score-based models, 2021.

\bibitem{consistencytrajectorymodelslearning}
Dongjun Kim, Chieh-Hsin Lai, Wei-Hsiang Liao, Naoki Murata, Yuhta Takida, Toshimitsu Uesaka, Yutong He, Yuki Mitsufuji, and Stefano Ermon.
\newblock Consistency trajectory models: Learning probability flow ode trajectory of diffusion, 2024.

\bibitem{edm}
Tero Karras, Miika Aittala, Timo Aila, and Samuli Laine.
\newblock Elucidating the design space of diffusion-based generative models.
\newblock In {\em Proc. NeurIPS}, 2022.

\bibitem{edm2}
Tero Karras, Miika Aittala, Jaakko Lehtinen, Janne Hellsten, Timo Aila, and Samuli Laine.
\newblock Analyzing and improving the training dynamics of diffusion models.
\newblock In {\em Proc. CVPR}, 2024.

\bibitem{align}
Amirmojtaba Sabour, Sanja Fidler, and Karsten Kreis.
\newblock Align your steps: Optimizing sampling schedules in diffusion models, 2024.

\bibitem{num_ode}
Ian Gladwell.
\newblock Numerical solution of ordinary differential equations {(L.} f. shampine).
\newblock {\em {SIAM} Rev.}, 37(1):122--123, 1995.

\bibitem{genie}
Tim Dockhorn, Arash Vahdat, and Karsten Kreis.
\newblock Genie: Higher-order denoising diffusion solvers, 2022.

\bibitem{dpm++}
Cheng Lu, Yuhao Zhou, Fan Bao, Jianfei Chen, Chongxuan Li, and Jun Zhu.
\newblock Dpm-solver++: Fast solver for guided sampling of diffusion probabilistic models, 2023.

\bibitem{earlystop}
Zhaoyang Lyu, Xudong XU, Ceyuan Yang, Dahua Lin, and Bo~Dai.
\newblock Accelerating diffusion models via early stop of the diffusion process, 2022.

\bibitem{instaflow}
Xingchao Liu, Xiwen Zhang, Jianzhu Ma, Jian Peng, and Qiang Liu.
\newblock Instaflow: One step is enough for high-quality diffusion-based text-to-image generation, 2024.

\bibitem{classifier-free}
Jonathan Ho and Tim Salimans.
\newblock Classifier-free diffusion guidance, 2022.

\bibitem{ImageNet}
Olga Russakovsky, Jia Deng, Hao Su, Jonathan Krause, Sanjeev Satheesh, Sean Ma, Zhiheng Huang, Andrej Karpathy, Aditya Khosla, Michael Bernstein, Alexander~C. Berg, and Li~Fei-Fei.
\newblock {ImageNet Large Scale Visual Recognition Challenge}.
\newblock {\em International Journal of Computer Vision (IJCV)}, 115(3):211--252, 2015.

\bibitem{sit}
Nanye Ma, Mark Goldstein, Michael~S. Albergo, Nicholas~M. Boffi, Eric Vanden-Eijnden, and Saining Xie.
\newblock Sit: Exploring flow and diffusion-based generative models with scalable interpolant transformers, 2024.

\bibitem{freedom}
Jiwen Yu, Yinhuai Wang, Chen Zhao, Bernard Ghanem, and Jian Zhang.
\newblock Freedom: Training-free energy-guided conditional diffusion model, 2023.

\bibitem{sd3.0}
Patrick Esser, Sumith Kulal, Andreas Blattmann, Rahim Entezari, Jonas Müller, Harry Saini, Yam Levi, Dominik Lorenz, Axel Sauer, Frederic Boesel, Dustin Podell, Tim Dockhorn, Zion English, Kyle Lacey, Alex Goodwin, Yannik Marek, and Robin Rombach.
\newblock Scaling rectified flow transformers for high-resolution image synthesis, 2024.

\end{thebibliography}
 \clearpage
\setcounter{page}{1}
\setcounter{lemma}{0}
\subsection{Detailed Proof}
\begin{lemma}
    The continuous integral in Eq.~\ref{eq:sub-sequance} could be solved by the high-order derivative of $v_{\theta}(\ast,\ast)$ as follows:
    \begin{equation}
            x_{t_{n}} = x_{t_{n-1}} + \sum_{i=0}^{k}C_{i}\frac{v^{i}_{\theta}(x_{t_{n-1},t_{n-1}})}{i!},
            \label{eq:integral}
    \end{equation}
    where $k$ is the order for the solver, we set $k=\infty$ to remove the estimation error term temporarily. $v_{\theta}(x_{t_{n-1},t_{n-1}})$ is the $i$-th order partial derivatives for $v_{\theta}(x_{t_{n-1},t_{n-1}})$ at $t_{n-1}$, where $v_{\theta}^{0}(\ast,\ast) = v_{\theta}(\ast,\ast)$.
    $C_{i}$ is the factor and could be calculated as:
    \begin{equation}
        C_{i} = \frac{(t_{n}-t_{n-1})^{i+1}}{i+1}.
    \end{equation}
    \label{lemma:discrete}
\end{lemma}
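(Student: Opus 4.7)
The plan is to derive Eq.~\ref{eq:integral} by performing a Taylor expansion of the integrand $v_{\theta}(x_{s},s)$ around the left endpoint $s=t_{n-1}$, and then integrating term-by-term. This is the standard route from a definite integral to a polynomial-in-stepsize expansion, and it will produce exactly the factors $C_{i}$ claimed in the lemma.

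First, I would start from the exact identity Eq.~\ref{eq:sub-sequance}, namely $x_{t_{n}} - x_{t_{n-1}} = \int_{t_{n-1}}^{t_{n}} v_{\theta}(x_{s},s)\,ds$. Assuming $v_{\theta}$ is sufficiently smooth in its time argument along the ODE trajectory (so that all partial derivatives $v_{\theta}^{i}(x_{t_{n-1}},t_{n-1})$ exist up to the required order, which is the hypothesis stated in the section), I would expand $v_{\theta}(x_{s},s)$ around $s=t_{n-1}$ as
\begin{equation}
v_{\theta}(x_{s},s) \;=\; \sum_{i=0}^{\infty} \frac{v_{\theta}^{i}(x_{t_{n-1}},t_{n-1})}{i!}\,(s-t_{n-1})^{i}.
\end{equation}
Here the higher derivatives are taken along the flow, consistent with the notation $v_{\theta}^{i}$ introduced just before the lemma.

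Next, I would substitute this expansion into the integral and exchange sum and integral (legitimate under the smoothness assumption on $v_{\theta}$, or alternatively one can work with the Taylor remainder at order $k$ to avoid any convergence issue, which matches the ``set $k=\infty$ to remove the estimation error term temporarily'' remark in the lemma). The only nontrivial computation is the scalar integral
\begin{equation}
\int_{t_{n-1}}^{t_{n}} (s-t_{n-1})^{i}\,ds \;=\; \frac{(t_{n}-t_{n-1})^{i+1}}{i+1},
\end{equation}
which is precisely $C_{i}$. Pulling the factors $v_{\theta}^{i}(x_{t_{n-1}},t_{n-1})/i!$ outside the integral and summing yields Eq.~\ref{eq:integral} exactly as stated.

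The only real delicacy is the justification for interchanging sum and integral, but this is essentially routine: either invoke uniform convergence on the compact interval $[t_{n-1},t_{n}]$ under smoothness of $v_{\theta}$, or, more carefully, truncate the expansion at order $k$ with an integral remainder of size $O((t_{n}-t_{n-1})^{k+2})$ and note that the lemma's $k=\infty$ convention absorbs exactly this term. No other step is harder than direct computation, so the proof is short; the main conceptual content is simply recognizing that the coefficients $C_{i}$ come from integrating the monomial $(s-t_{n-1})^{i}$ over the sub-interval.
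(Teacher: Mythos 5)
Your proposal is correct and follows essentially the same route as the paper's own proof: Taylor-expand $v_{\theta}(x_{s},s)$ about $s=t_{n-1}$, integrate term-by-term, and identify $C_{i}$ with $\int_{t_{n-1}}^{t_{n}}(s-t_{n-1})^{i}\,ds$ after the $1/i!$ factor is pulled out. Your added care about the sum--integral interchange and the explicit $O((t_{n}-t_{n-1})^{k+2})$ remainder bound is a minor refinement the paper omits, but it does not change the argument.
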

\begin{proof}
    With the Taylor expansion first time for $v_{\theta}(x_{s},s)ds$ w.r.t $s = t_{n-1}$, we have:
    \begin{equation}
    \begin{split}
        v_{\theta}(x_{s},s)ds &= \sum_{i=0}^{k}\frac{v^{i}_{\theta}(x_{t_{n-1},t_{n-1}})}{i!}(s-t_{n-1})^i\\
        &= \sum_{i=0}^{k}v^{i}_{\theta}(x_{t_{n-1},t_{n-1}})\frac{(s-t_{n-1})^i}{i!}.
    \end{split}
    \end{equation}
    \begin{equation}
    \end{equation}
    Then, $v^{i}_{\theta}(x_{t_{n-1},t_{n-1}})$ is irrelevant to the $s$, we have:
    \begin{equation}
    \begin{split}
    \int_{t_{n-1}}^{t_{n}} v_{\theta}(x_{s},s)ds &= \sum_{i=0}^{k}v^{i}_{\theta}(x_{t_{n-1},t_{n-1}}) \\
    &\int_{t_{n-1}}^{t_{n}}\frac{(s-t_{n-1})^i}{i!}ds.
    \end{split} 
    \end{equation}
    Let $C_{i}=\int_{t_{n-1}}^{t_{n}}\frac{(s-t_{n-1})^i}{i!}ds$, $C_{i}$ now has the closed-form as:
    \begin{equation}
        \begin{split}
            C_{i}&=\int_{t_{n-1}}^{t_{n}}\frac{(s-t_{n-1})^i}{i!}ds\\
            &= \frac{(s-t_{n-1})^{i+1}}{(i+1)i!}\bigg|^{t_{n}}_{t_{n-1}}\\
            &= \frac{(t_{n}-t_{n-1})^{i+1}}{(i+1)!}.\\
            & = \frac{(t_{n}-t_{n-1})^{i+1}}{(i+1)(i)!}.
        \end{split}
    \end{equation}
    Splitting $\frac{1}{i!}$ from $C_{i}$ to match $\frac{v^{i}_{\theta}(x_{t_{n-1},t_{n-1}})}{i!}$, we have:
    \begin{equation}
        C_{i} = \frac{(t_{n}-t_{n-1})^{i+1}}{i+1}.
    \end{equation}
    Thus, we finish the proof.
\end{proof}

\begin{lemma}
   The flow-solver could be defined based on $B_{m}D_{m}$ as follows:
    \begin{equation}
        x_{t_{n}} = x_{t_{n-1}} + h_{n}v_{\theta}(x_{t_{n-1}},t_{n-1}) + \sum_{m=1}^{p} B_{m}D_{m} ,
        \label{eq:unic_flow}
    \end{equation}
    where $p$ is the number of the previous steps, which is equal to the $p$-th derivatives it could be approximated.
    \begin{equation}
        D_{m} = v_{\theta}(x_{t_{m}},t_{m}) - v_{\theta}(x_{t_{n-1}},t_{n-1}).
    \end{equation}
    The $B_{m}$ is the coefficient, which is equal to:
    \begin{gather}
        B_{m} = S_{p,m} \\
        S_{p,m} =
         R_{p}^{-1} \begin{bmatrix}
            C_{1}\\
            ...\\
            C_{p}
        \end{bmatrix}\\
        R_{p} = \begin{bmatrix}
             1&t_{n-2}-t_{n-1}  &...& (t_{n-2}-t_{n-1})^{p-1}\\
             1&t_{n-3}-t_{n-1} &...& (t_{n-3}-t_{n-1})^{p-1} \\
             ...&... &...& ... \\
            1&t_{n-1-p}-t_{n-1} &...& (t_{n-1-p}-t_{n-1})^{p-1}
        \end{bmatrix}.
    \end{gather}
    $S_{p,m}$ is the $m$-th row and $p$ is the number of rows since $p$-th derivatives generates the number of $p$ coefficients. 
    \label{lemma: correction term}
\end{lemma}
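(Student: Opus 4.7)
The plan is to derive the displayed formula from Lemma~\ref{lemma:discrete} by Taylor-expanding each $D_m$ around $t_{n-1}$ and then solving a small linear system so that $\sum_{m=1}^{p} B_m D_m$ reproduces the high-order derivative sum appearing in the exact integral expansion.

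First, I would split the $i=0$ term out of Lemma~\ref{lemma:discrete}: since $C_0 = t_n - t_{n-1} = h_n$ and $v_\theta^{0} = v_\theta$, one obtains
$$x_{t_n} = x_{t_{n-1}} + h_n\, v_\theta(x_{t_{n-1}}, t_{n-1}) + \sum_{i=1}^{p} \frac{C_i\, v_\theta^{i}(x_{t_{n-1}}, t_{n-1})}{i!} + O(h_n^{p+2}).$$
Comparison with the statement shows that it suffices to construct $B_1, \ldots, B_p$ so that $\sum_{m=1}^{p} B_m D_m$ equals $\sum_{i=1}^{p} \frac{C_i\, v_\theta^{i}(x_{t_{n-1}}, t_{n-1})}{i!}$ up to higher-order remainders.

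Next, I would Taylor-expand each $D_m = v_\theta(x_{t_{n-1-m}}, t_{n-1-m}) - v_\theta(x_{t_{n-1}}, t_{n-1})$ about $t_{n-1}$. Writing $h_{n-1-m} := t_{n-1-m} - t_{n-1}$, the constant term cancels and
$$D_m = \sum_{i=1}^{p} \frac{(h_{n-1-m})^i}{i!}\, v_\theta^{i}(x_{t_{n-1}}, t_{n-1}) + O\!\left((h_{n-1-m})^{p+1}\right).$$
Multiplying by $B_m$, summing, and interchanging the order of summation gives
$$\sum_{m=1}^{p} B_m D_m = \sum_{i=1}^{p} \frac{v_\theta^{i}(x_{t_{n-1}}, t_{n-1})}{i!}\left(\sum_{m=1}^{p} B_m (h_{n-1-m})^i\right) + \text{remainder}.$$
Matching coefficients of $v_\theta^{i}/i!$ for $i=1,\ldots,p$ produces the linear system $\sum_{m=1}^{p} B_m (h_{n-1-m})^i = C_i$, which in matrix form reads $R_p\, B = (C_1, \ldots, C_p)^T$ (after reconciling the column-exponent convention in the displayed $R_p$ with the Taylor-matching powers $1, \ldots, p$). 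Because the previous time nodes $t_{n-2}, \ldots, t_{n-1-p}$ are pairwise distinct, $R_p$ is a nonsingular Vandermonde-type matrix, so the system admits the unique closed-form solution $B = R_p^{-1}(C_1, \ldots, C_p)^T$, i.e.\ $B_m = S_{p,m}$ as claimed.

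The main obstacle is largely bookkeeping rather than conceptual: one must align the indexing convention between the Taylor side (where exponents naturally run over $1, \ldots, p$) and the matrix $R_p$ as displayed in the statement, and then argue that the truncated Taylor remainders from the $D_m$-expansions do not contaminate the $p$ matched derivative terms at the advertised order. The conceptual core is simply the Vandermonde non-singularity ensuring a unique $B$ whenever the previous step nodes are distinct; the companion error bound at $O(h_n^{p})$ is deferred to the subsequent approximation lemma.
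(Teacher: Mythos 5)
Your proposal is correct and follows essentially the same route as the paper's proof: Taylor-expand each $D_m$ about $t_{n-1}$ so the zeroth-order terms cancel, interchange the summations, match the coefficients of $v_\theta^{i}/i!$ against $C_i$ to obtain the linear system $R_p B = (C_1,\ldots,C_p)^T$, and invoke invertibility of the Vandermonde-type matrix for distinct previous nodes. Your explicit remark about reconciling the exponent convention in the displayed $R_p$ (columns running over powers $0,\ldots,p-1$) with the matching powers $1,\ldots,p$ is a point the paper itself glosses over, so flagging it is appropriate rather than a gap.
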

\begin{proof}
    Our target is to use $\sum_{m=1}^{p}B_{m}D_{m}$ to estimate $\sum_{i=1}^{k}C_{i}\frac{v^{i}_{\theta}(x_{t_{n-1},t_{n-1}})}{i!}$, the $k$-th order derivative. Firstly, let $p=k$, to represent $\sum_{m=1}^{p}B_{m}D_{m}$ could estimate the $k$-order derivative. Then, by Taylor expansion of $v_{\theta}{x_{t_{m}},t_{m}}$ in $D_{m}$ w.r.t $t_{n-1}$, we have:
    \begin{equation}
    \begin{split}
        D_{m} &= v_{\theta}(x_{t_{m}},t_{m}) - v_{\theta}(x_{t_{n-1}},t_{n-1}) \\
        &= \sum_{i=0}^{k}\frac{v_{\theta}^{i}(x_{t_{n-1},t_{n-1}})}{i!}(t_{m}-t_{n-1})^i - v_{\theta}(x_{t_{n-1}},t_{n-1}) \\
        & = \sum_{i=1}^{k}\frac{v_{\theta}^{i}(x_{t_{n-1},t_{n-1}})}{i!}(t_{m}-t_{n-1})^i.
    \end{split}
        \label{eq:second_taylor}
    \end{equation}
    
    With $B_{m}$ to Eq.~\ref{eq:second_taylor}, we have:
    \begin{equation}
            \sum_{m=1}^{p}B_{m}D_{m}= \sum_{m=1}^{p}B_{m}\sum_{i=1}^{k}\frac{v_{\theta}^{i}(x_{t_{n-1},t_{n-1}})}{i!}(t_{m}-t_{n-1})^i.
    \end{equation}
    Since $p=k$, thus we further have:
    \begin{equation}
    \begin{split}
    \sum_{m=1}^{p}B_{m}D_{m}  &= \sum_{m=1}^{p}B_{m}\sum_{i=1}^{p}\frac{(t_{m}-t_{n-1})^iv_{\theta}^{i}(x_{t_{n-1}},t_{n-1})}{i!} \\
    &=\sum_{m=1}^{p}\sum_{i=1}^{p}B_{m}\frac{v_{\theta}^{i}(x_{t_{n-1},t_{n-1}})}{i!}(t_{m}-t_{n-1})^i \\
 &=\sum_{i=1}^{p}\frac{v_{\theta}^{i}(x_{t_{n-1},t_{n-1}})}{i!}\sum_{m=1}^{p}B_{m}(t_{m}-t_{n-1})^i 
    \end{split}
    \end{equation}

    Now, our target is change to let $\sum_{m=1}^{p}B_{m}(t_{m}-t_{n-1})^i$ be equal to $C_{i}$. This is a polynomial interpolation problem, and thus we have the following formation:
    \begin{equation}
        R_{p} \begin{bmatrix}
            B_{1}\\
            ...\\
            B_{p}
        \end{bmatrix} = \begin{bmatrix}
            C_{1}\\
            ...\\
            C_{p}
        \end{bmatrix},
    \end{equation}
    where $R_{p}$ is the Vandermonde matrix:
    \begin{equation}
        R_{p} = \begin{bmatrix}
            1 & t_{1}-t_{n-1}  &...& (t_{1}-t_{n-1})^{p-1}\\
            1 & t_{2}-t_{n-1} &...& (t_{2}-t_{n-1})^{p-1} \\
            ... & ... &...& ... \\
            1 & t_{p}-t_{n-1} &...& (t_{p}-t_{n-1})^{p-1}
        \end{bmatrix}.
    \end{equation}
    Since $t_{m}$ is the $m$-th previous step thus $t_{p}-t_{n-1}$ is monotonicity, which ensure the invertibility of $R_{p}$. Therefore, we have:
    \begin{equation}
        \begin{bmatrix}
            B_{1}\\
            ...\\
            B_{p}
        \end{bmatrix} = R_{p}^{-1} \begin{bmatrix}
            C_{1}\\
            ...\\
            C_{p}
        \end{bmatrix}.
    \end{equation}
    Thus, we finish the proof. Then, the approximation errors could be notated as $O(h_{n}^{p})$ since the coefficient matching could help the flow-solver to approximate $p$ high-order derivatives of $t_{n}$ at $t_{n-1}$.
\end{proof}

\begin{lemma}
    Given the $p$-th flow-solver in Lemma~\ref{lemma:correction term} at $t_{n-1}$ timesteps, the approximation errors for the next step $t_{n}$ are equal to $O(h_{n}^{p})$.
    \label{lemma:approximation}
\end{lemma}
\begin{proof}
    Leveraging the $B_{m}$, given $p$, we have:
    \begin{equation}
        \sum_{m=1}^{p} B_{m}D_{m} = \sum_{i=0}^{p}v^{i}_{\theta}(x_{t_{n-1}}, t_{n-1})\frac{(t_{n}-t_{n-1})^{i}}{i!} + O(h_{n}^{p}).
    \end{equation}
    Thus, the approximation errors are $O(h_{n}^{p})$ and we finish the proof.
\end{proof}

\begin{lemma}
    Given the $p$-th flow-solver at $t_{n-1}$ timesteps, the approximation errors for the next step $t_{n}$ are increasing from $O(h_{n}^{p})$ to $O(h_{n}^{p+1})$ by introducing predictor-corrector trick.
    \label{lemma:approximation_trick}
\end{lemma}
\begin{proof}
    The predictor-corrector trick~\cite{unipc} aims to add an additional corrector step by leveraging the $v_{\theta}(x_{t},t)$ to enhance $x_{t}$, which benefits the $x_{t+1}$. This could be regarded as introducing $v_{\theta}(x_{t},t)$ into the buff to approximate $p+1$-th order derivative. Thus, the approximation errors are $O(h_{n}^{p+1})$~\cite{unipc} and we finish the proof.
\end{proof}
\subsection{More Quantitative Examples}
\textbf{Comparison among different NEFs.} To show the influence of the different NEFs, we report the ablation study in Table~\ref{tab:ala_NEF}. It could be found that the Heun solver works well under more NFEs than the Euler solver. Meanwhile, these results also prove the potential of the flow-solver since we save half NEF to achieve a similar performance to the Heun solver.

\begin{figure*}
    \centering
    \includegraphics[width=\textwidth]{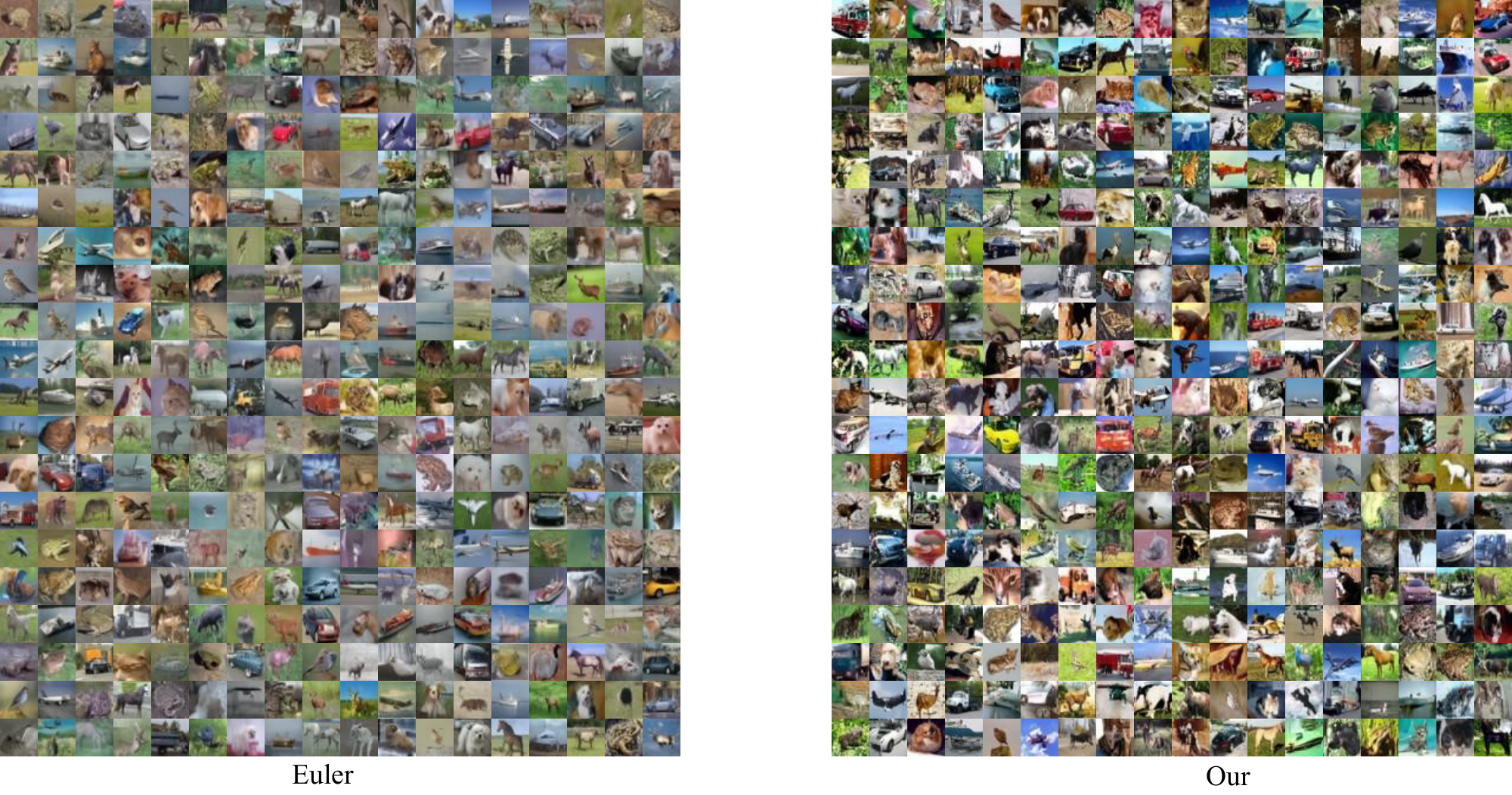}
    \caption{Qualitative examples for the unconditional generation on CIFAR-10 under $\text{NEF}=10$.}
    \label{fig:m_cifar10}
\end{figure*}
\begin{table}
    \centering
    \caption{Ablation study for different NEFs among different solvers on CIFAR-10. We compared the FID $\downarrow$ among different baselines with different time steps $T$.}
    \begin{tabular}{ccccc}
    \hline \multirow{2}{*}{ Sampling Method } & \multicolumn{4}{c}{ NFE } \\
    \cline { 2 - 5 } & 7 & 8 & 9 & 10 \\
    \hline Euler solver& 23.43 &  19.23  & 16.07  & 13.79 \\
    Flow-solver (our)  & \textbf{8.93} &  \textbf{7.70} & \textbf{7.27} & \textbf{6.62}\\
    \midrule
    \multirow{2}{*}{ Heun solver} & 14 & 16 &18 &20\\
     \cline { 2 - 5 }& 9.33 & 8.35 & 7.63 & 7.05 \\
    \hline
    \end{tabular}
    \label{tab:ala_NEF}
\end{table}
\begin{figure*}
    \centering
    \includegraphics[width=\textwidth]{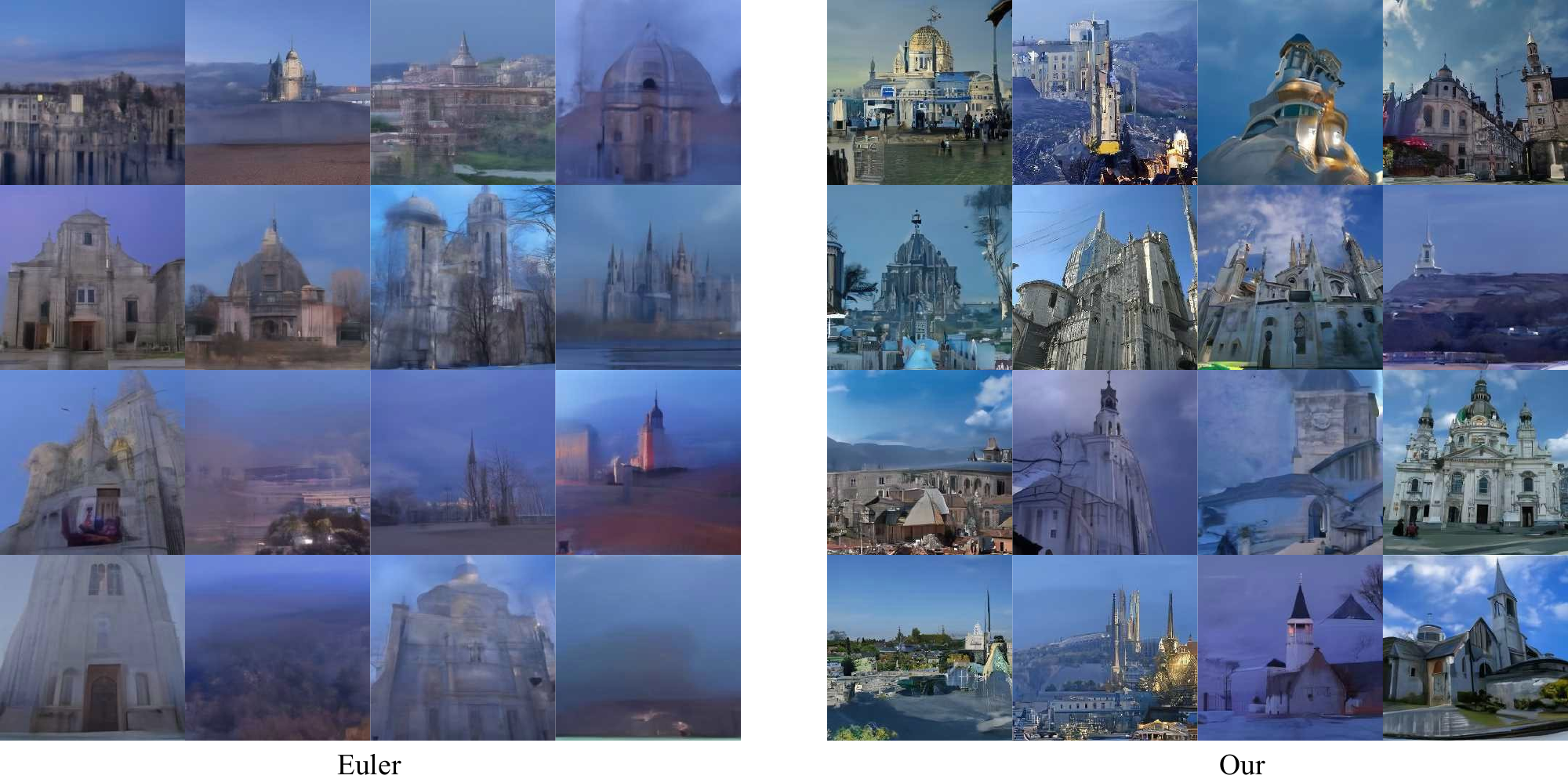}
    \caption{Qualitative examples for the unconditional generation on LSUN-Church under $\text{NEF}=8$.}
    \label{fig:m_church_8}
\end{figure*}
\begin{figure*}
    \centering
    \includegraphics[width=\textwidth]{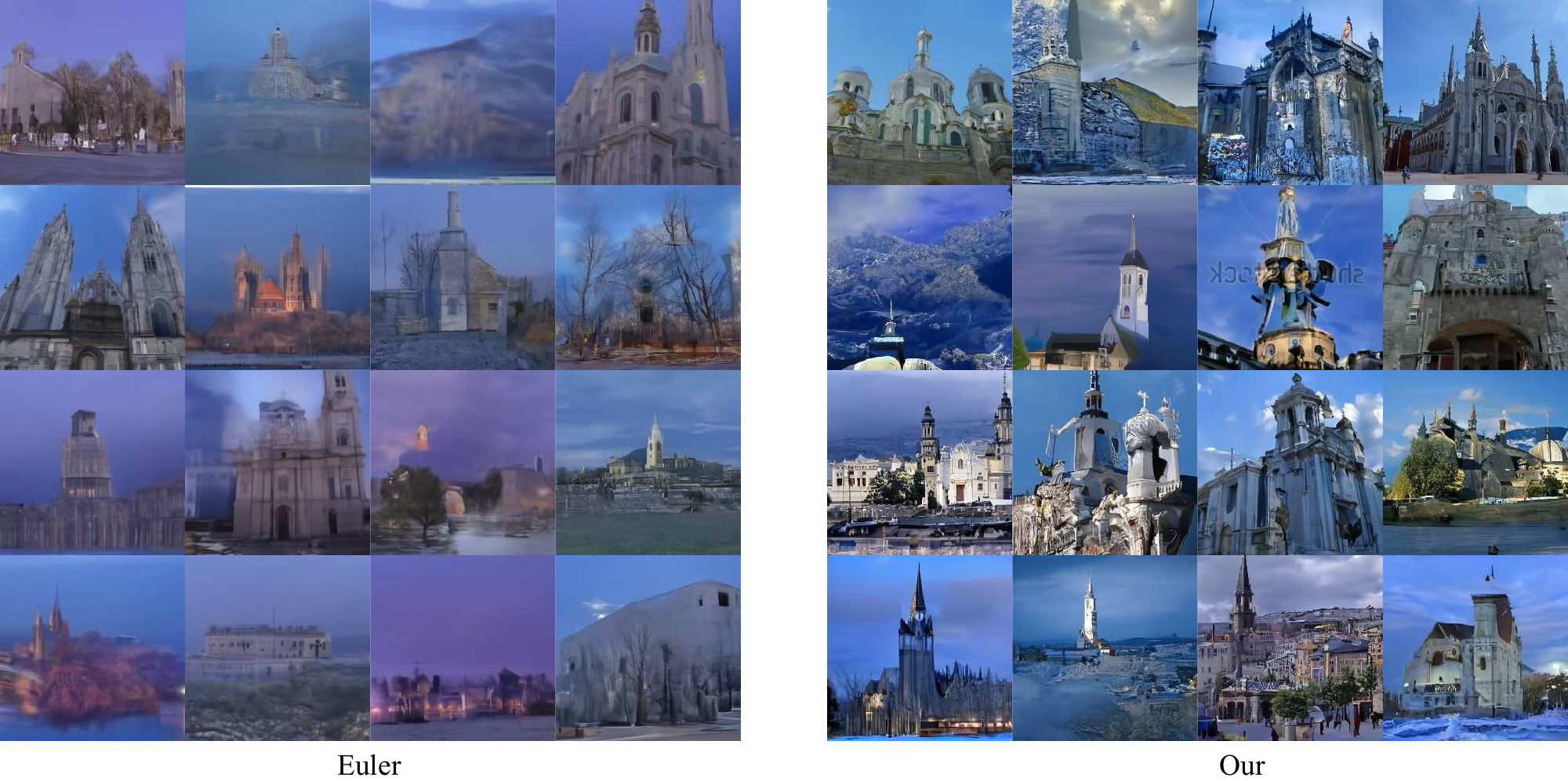}
    \caption{Qualitative examples for the unconditional generation on LSUN-Church under $\text{NEF}=9$.}
    \label{fig:m_church_9}
\end{figure*}
\begin{figure*}
    \centering
    \includegraphics[width=\textwidth]{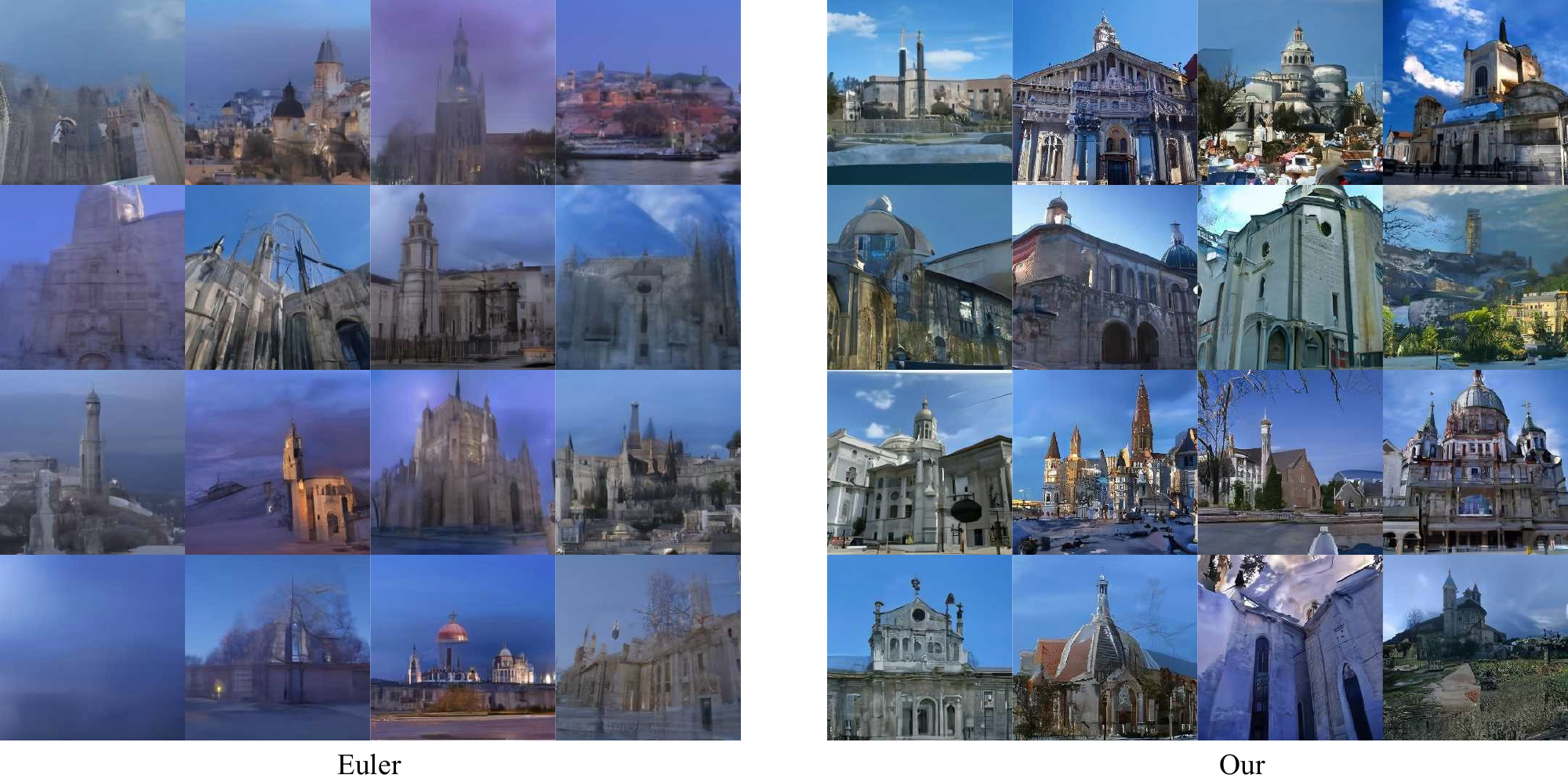}
    \caption{Qualitative examples for the unconditional generation on LSUN-Church under $\text{NEF}=10$.}
    \label{fig:m_church_10}
\end{figure*}
\begin{figure*}
    \centering
    \includegraphics[width=\textwidth]{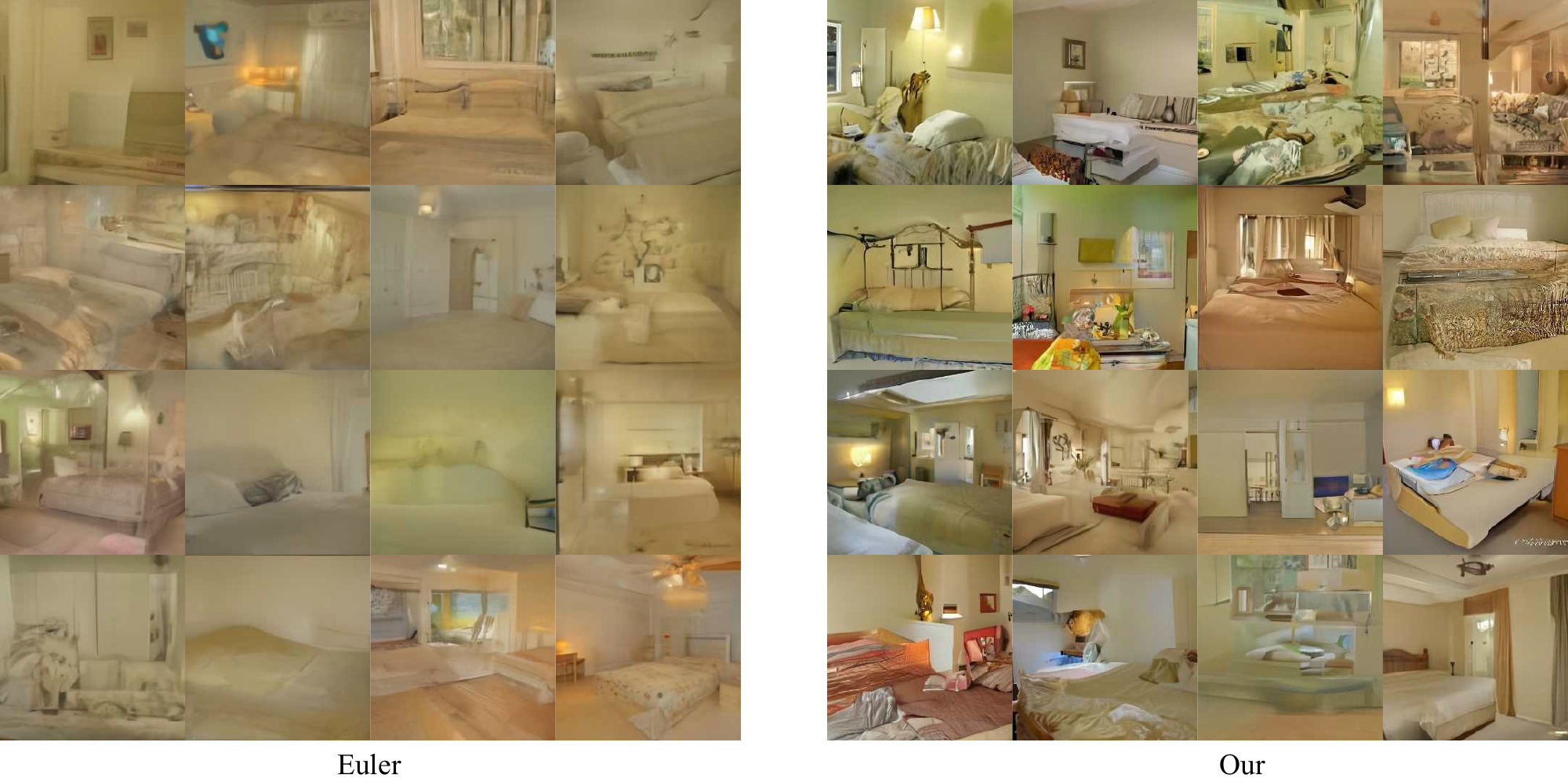}
    \caption{Qualitative examples for the unconditional generation on LSUN-Bedroom under $\text{NEF}=7$.}
    \label{fig:m_bed_7}
\end{figure*}
\begin{figure*}
    \centering
    \includegraphics[width=\textwidth]{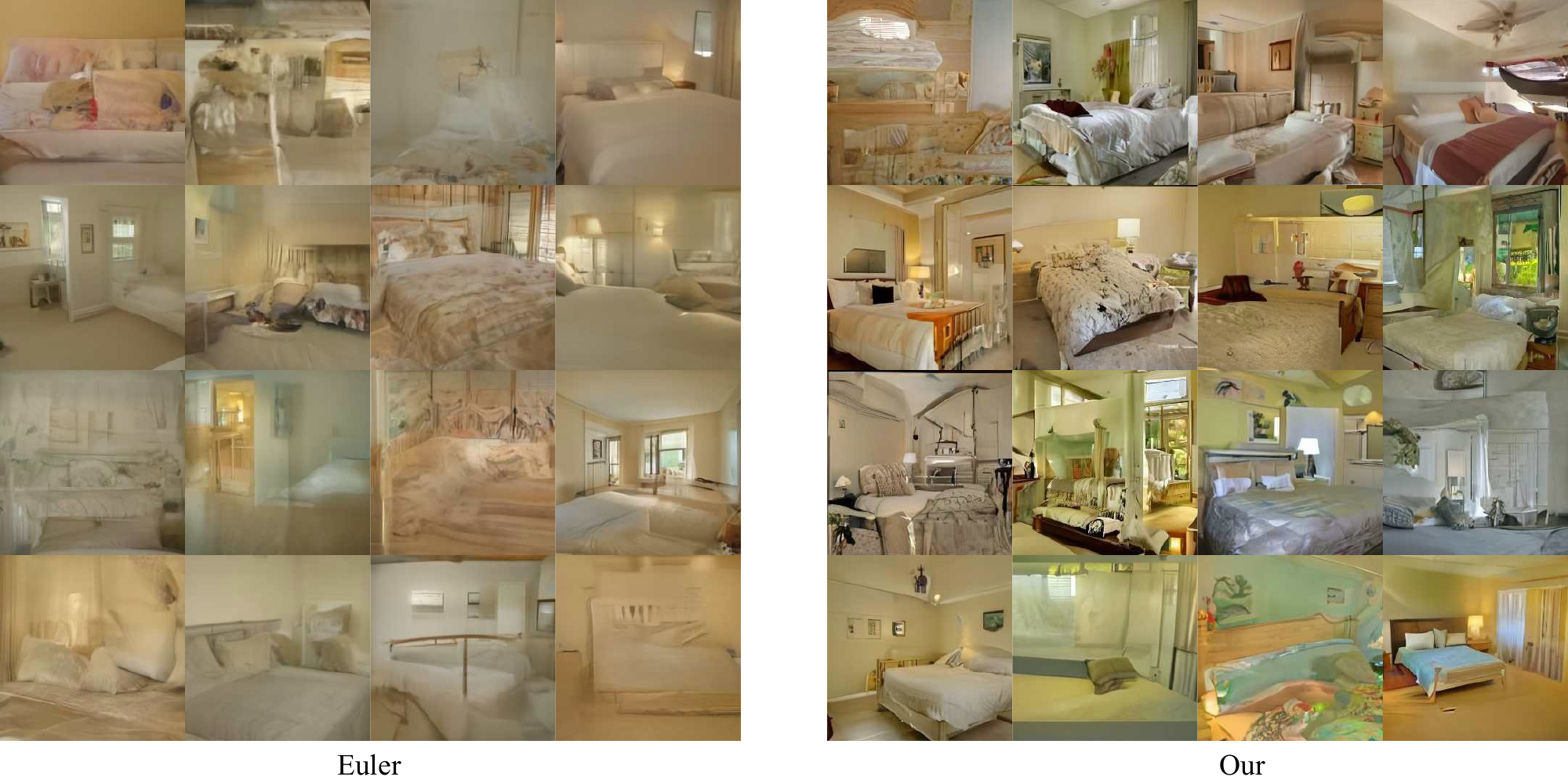}
    \caption{Qualitative examples for the unconditional generation on LSUN-Bedroom under $\text{NEF}=8$.}
    \label{fig:m_bed_8}
\end{figure*}
\begin{figure*}
    \centering
    \includegraphics[width=\textwidth]{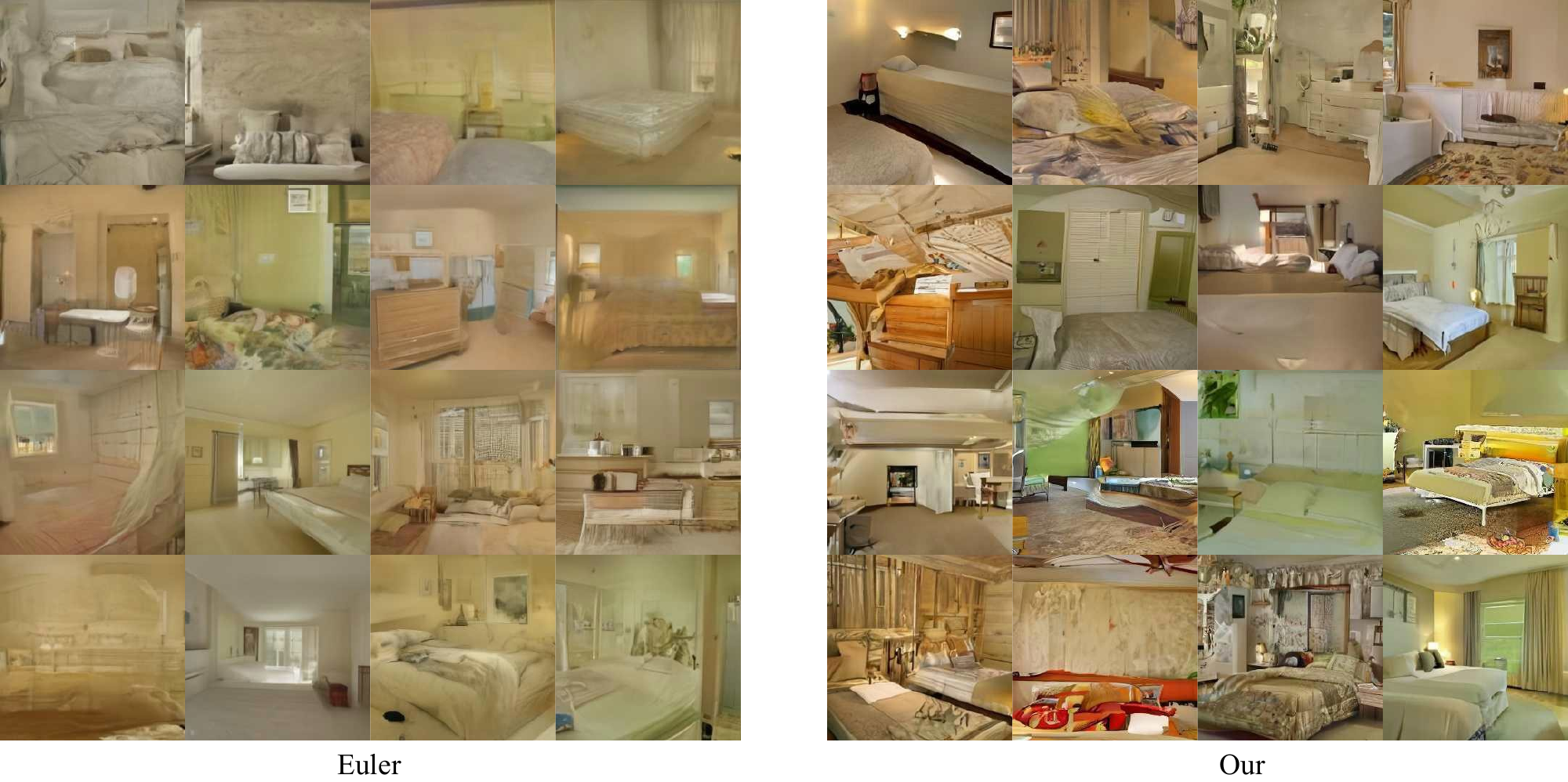}
    \caption{Qualitative examples for the unconditional generation on LSUN-Bedroom under $\text{NEF}=9$.}
    \label{fig:m_bed_9}
\end{figure*}
\begin{figure*}
    \centering
    \includegraphics[width=\textwidth]{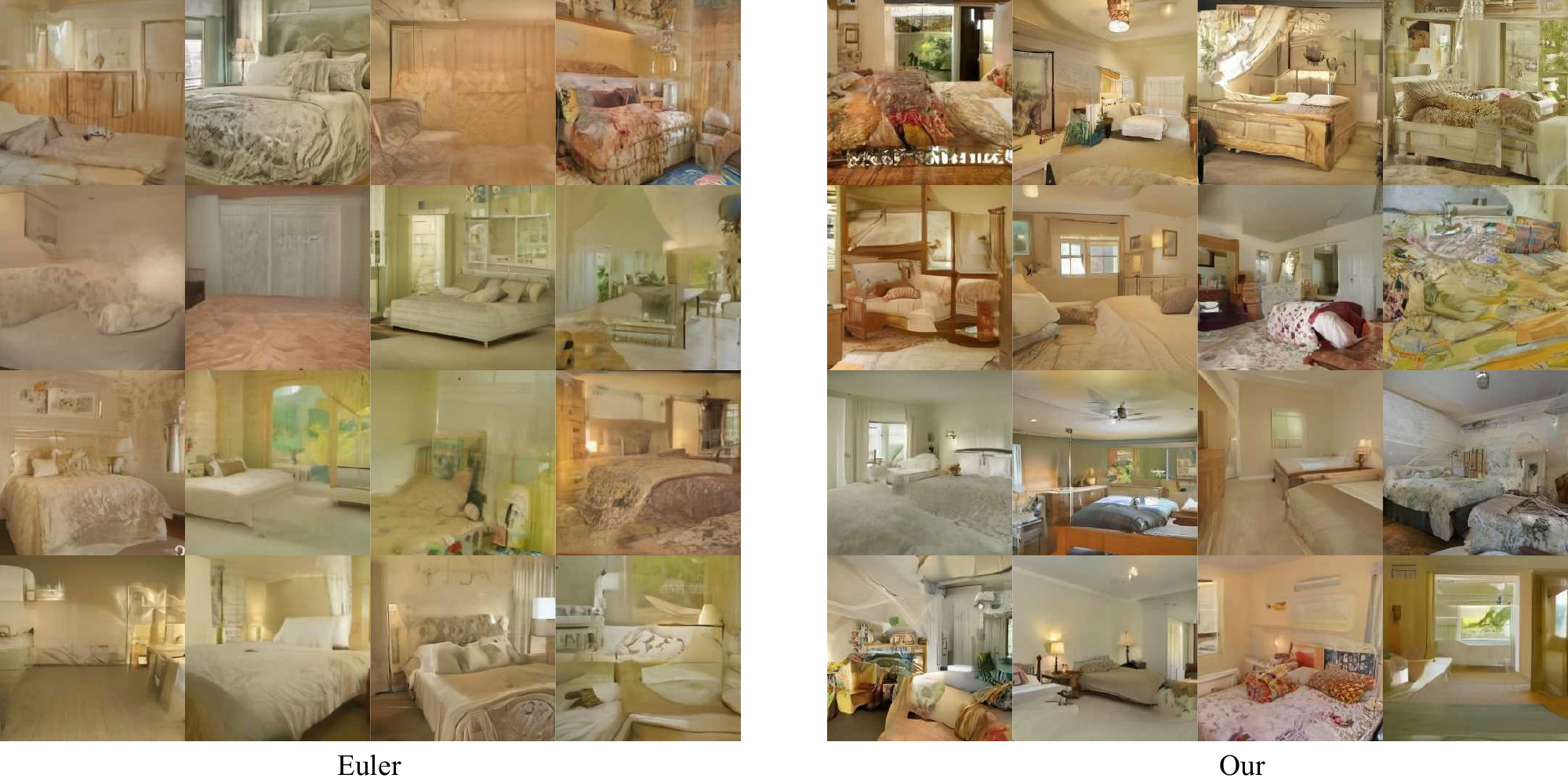}
    \caption{Qualitative examples for the unconditional generation on LSUN-Bedroom under $\text{NEF}=10$.}
    \label{fig:m_bed_10}
\end{figure*}
\begin{figure*}
    \centering
    \includegraphics[width=\textwidth]{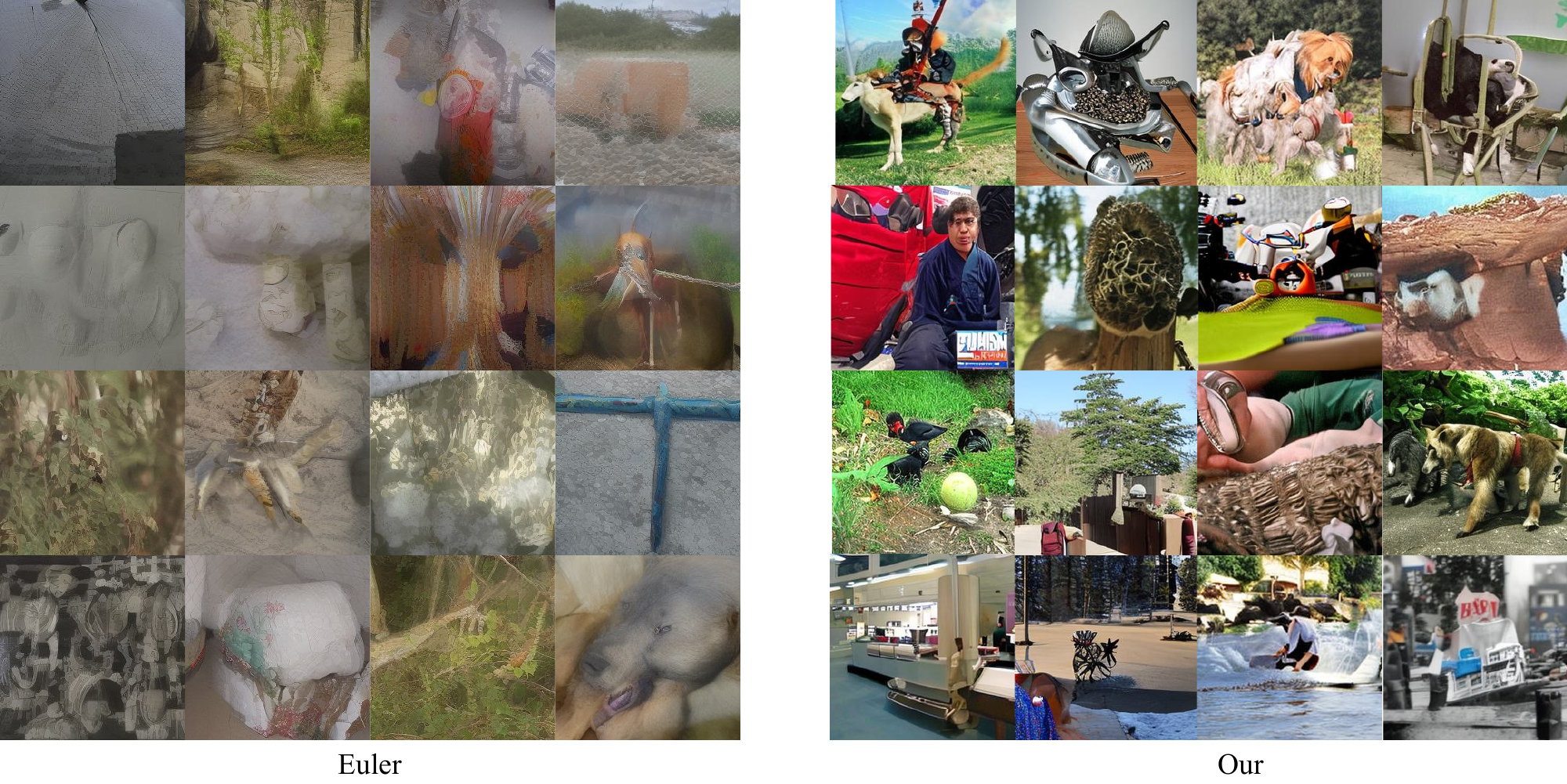}
    \caption{Qualitative examples for the unconditional generation on ImageNet under $\text{NEF}=7$.}
    \label{fig:m_imagenet_7}
\end{figure*}
\begin{figure*}
    \centering
    \includegraphics[width=\textwidth]{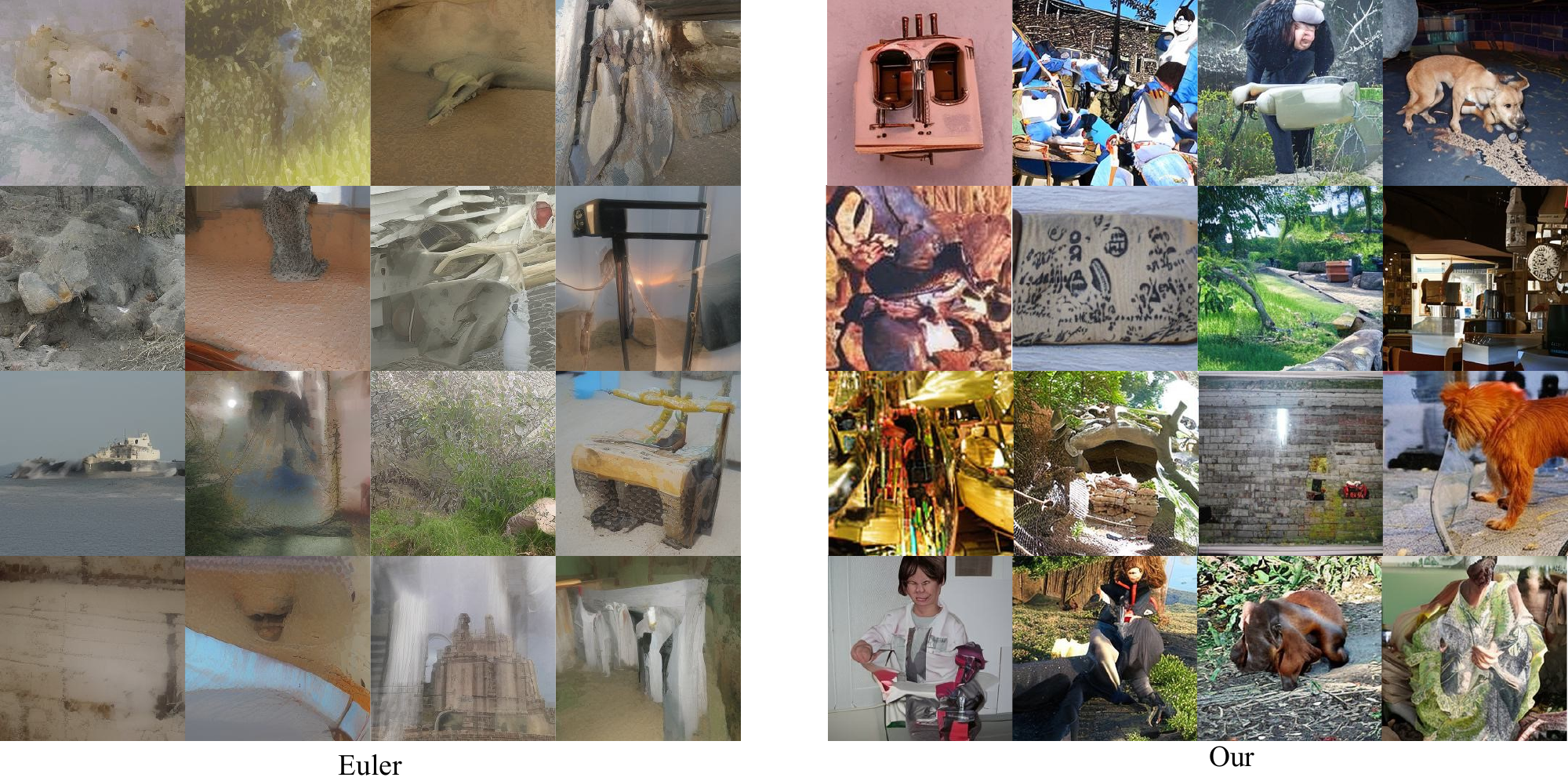}
    \caption{Qualitative examples for the unconditional generation on ImageNet under $\text{NEF}=8$.}
    \label{fig:m_imagenet_8}
\end{figure*}
\begin{figure*}
    \centering
    \includegraphics[width=\textwidth]{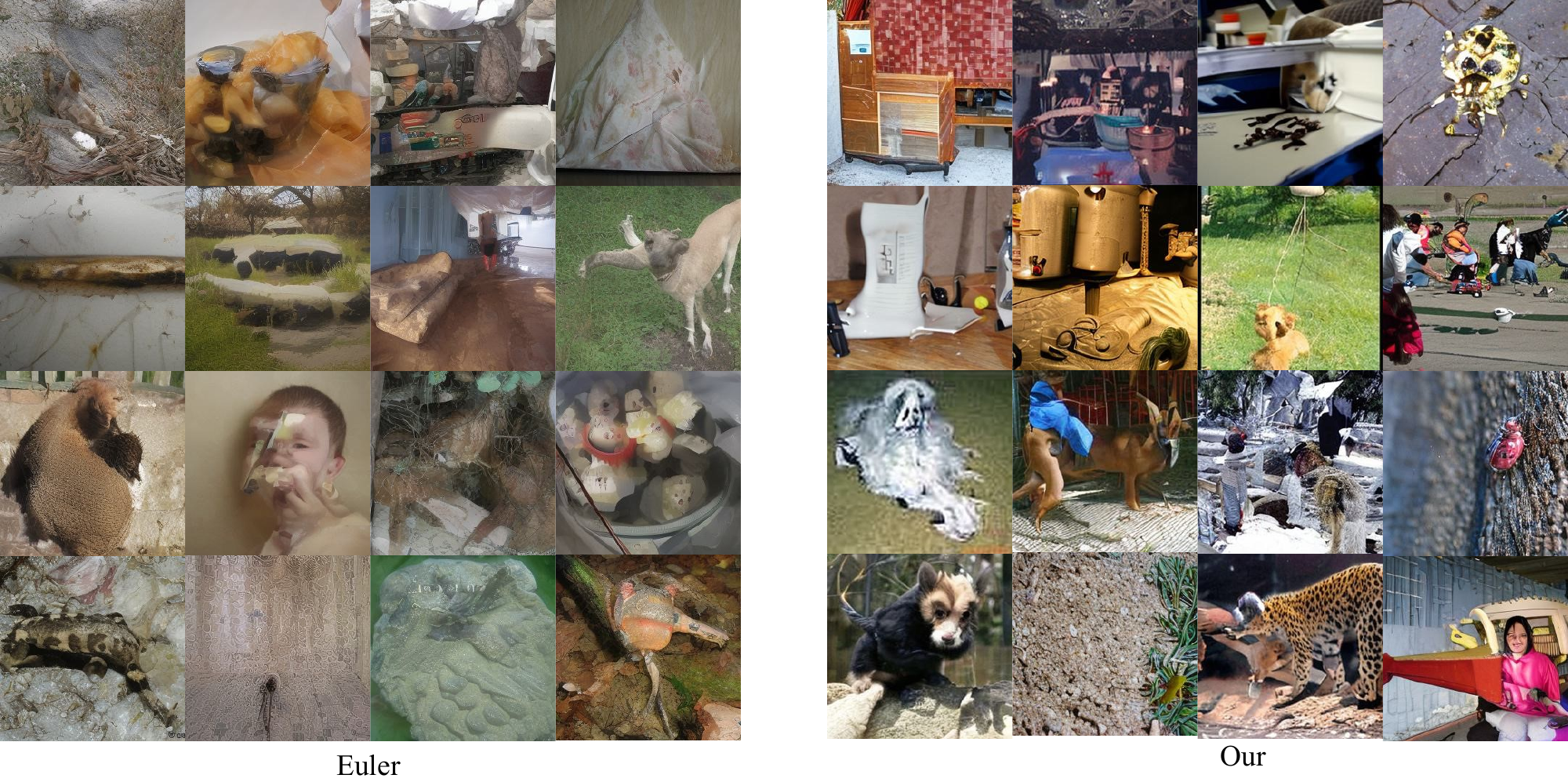}
    \caption{Qualitative examples for the unconditional generation on ImageNet under $\text{NEF}=9$.}
    \label{fig:m_imagenet_9}
\end{figure*}
\begin{figure*}
    \centering
    \includegraphics[width=\textwidth]{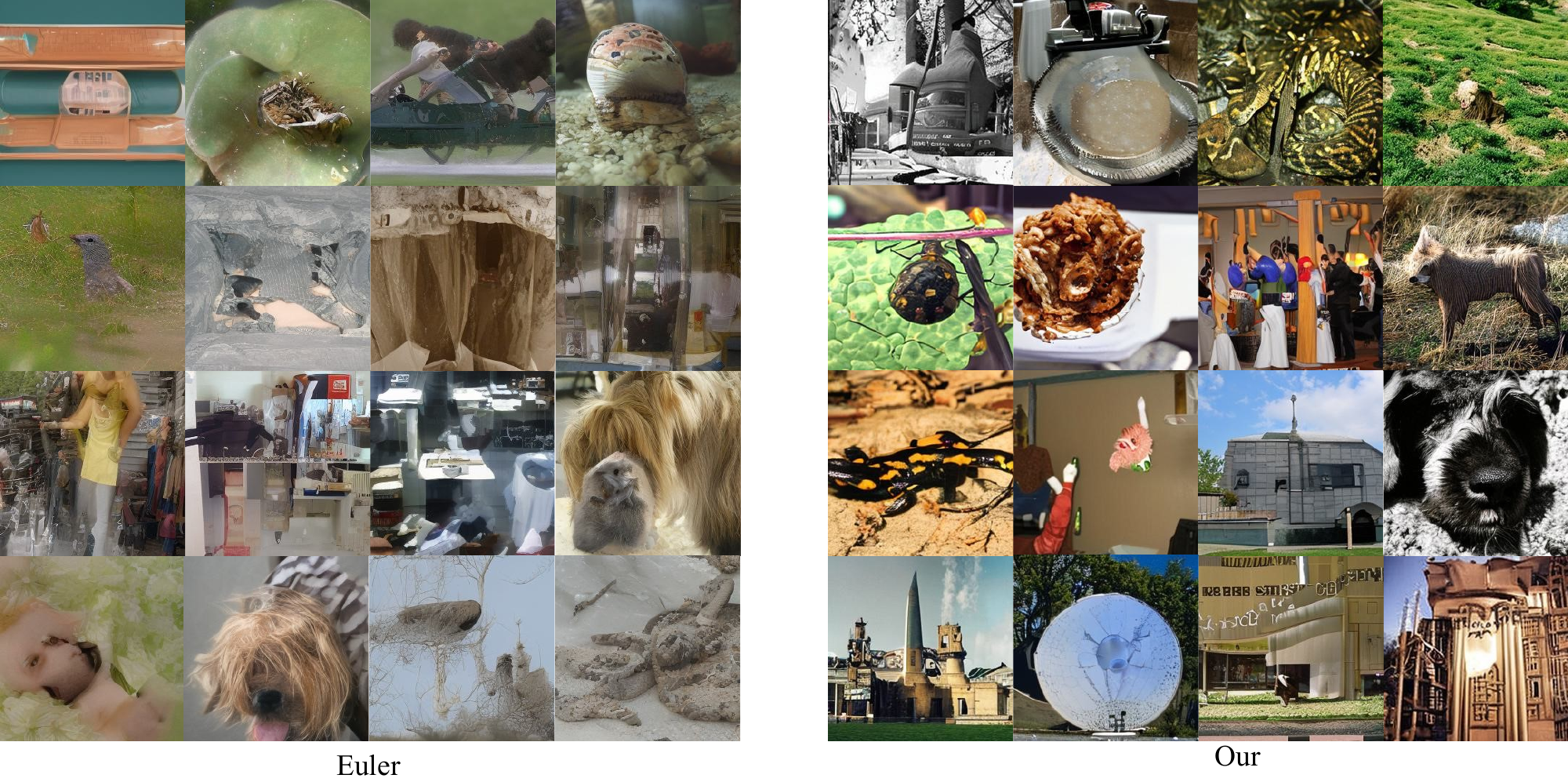}
    \caption{Qualitative examples for the unconditional generation on ImageNet under $\text{NEF}=10$.}
    \label{fig:m_imagenet_10}
\end{figure*}
\begin{figure*}
    \centering
    \includegraphics[width=\textwidth]{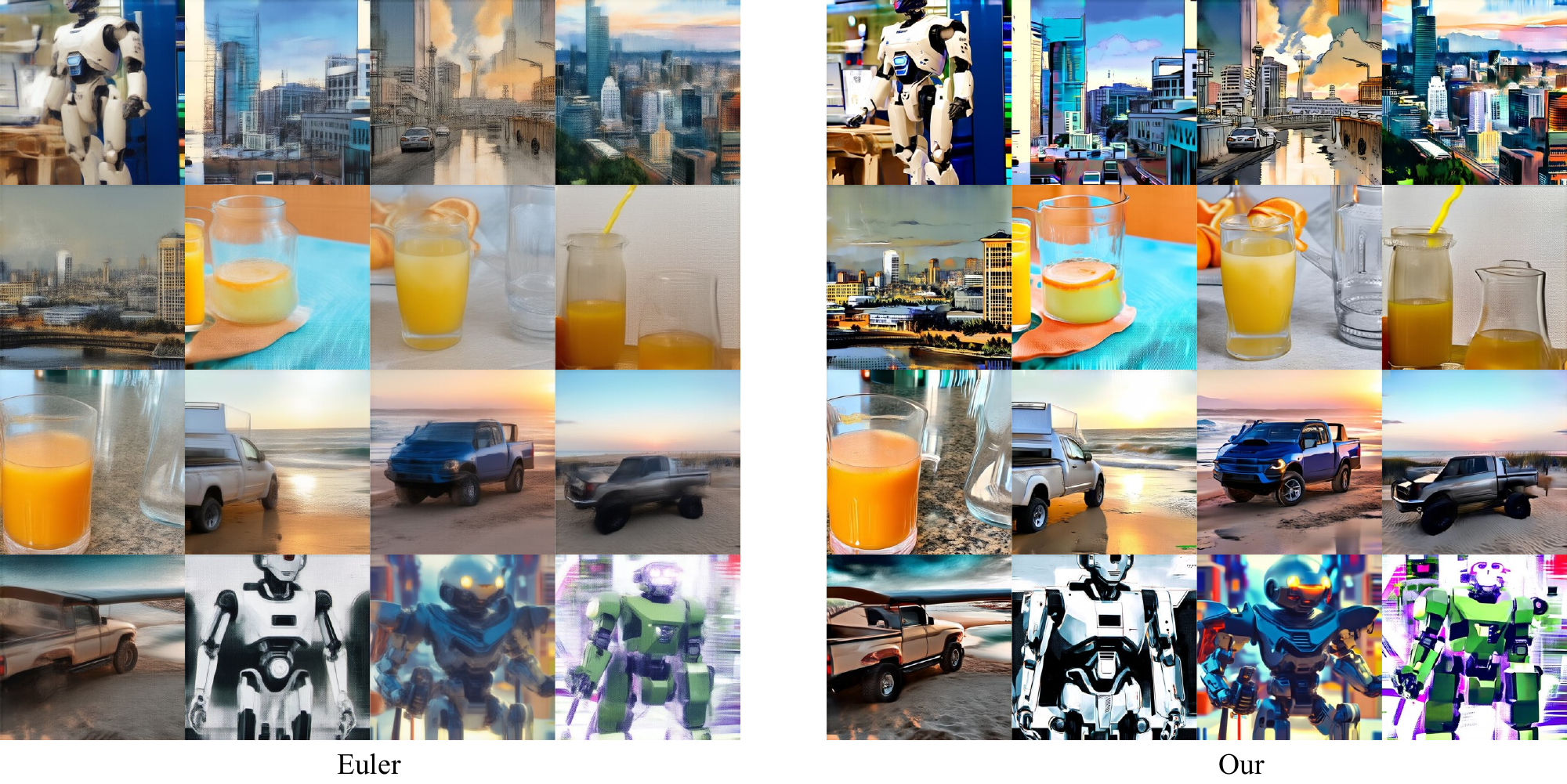}
    \caption{Qualitative examples for the conditional generation based on Stable Diffusion 3.0 under $\text{NEF}=6$ and $\text{CFG}=2$.}
    \label{fig:m_sd_2_6}
\end{figure*}
\begin{figure*}
    \centering
    \includegraphics[width=\textwidth]{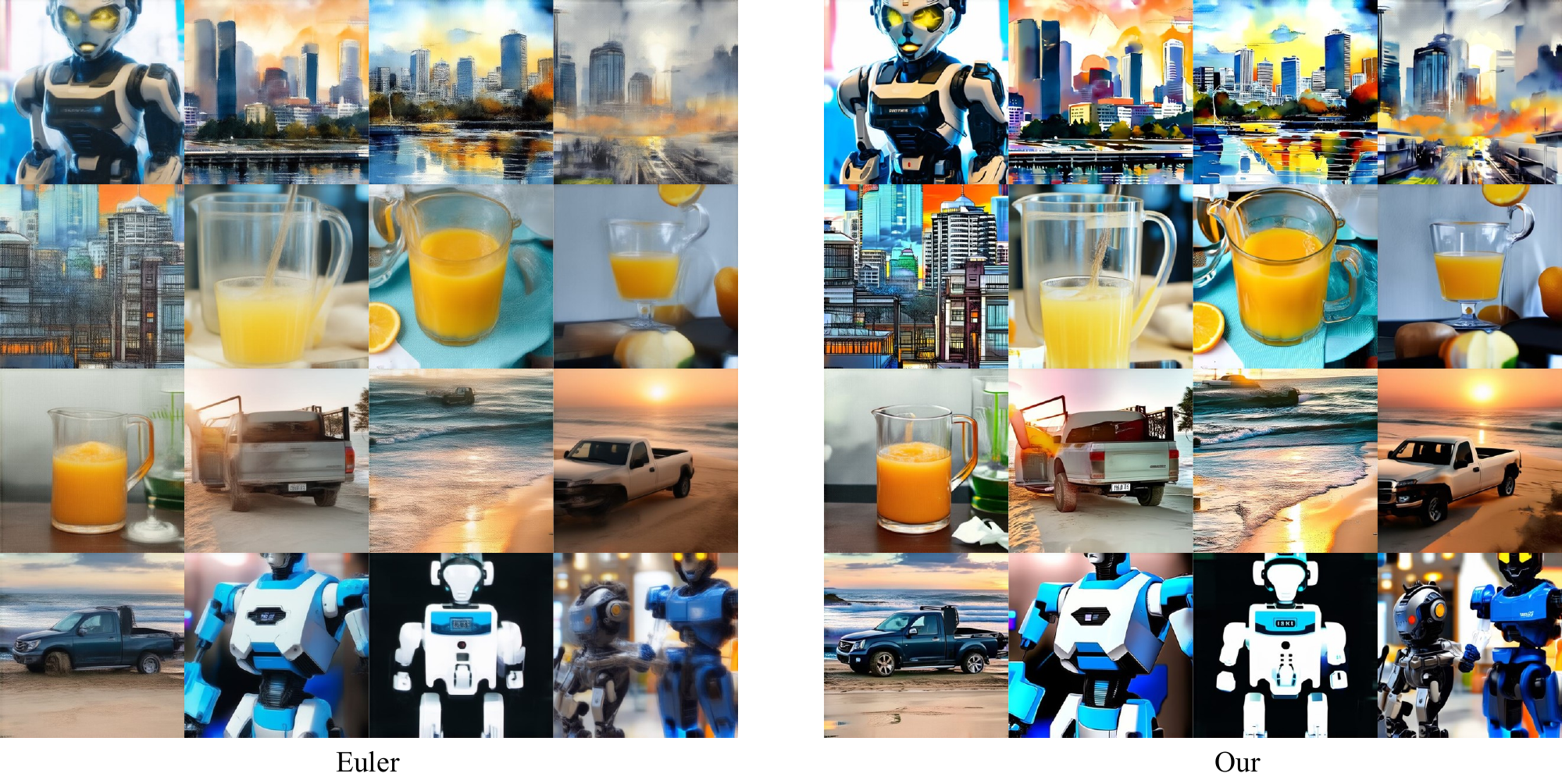}
    \caption{Qualitative examples for the conditional generation based on Stable Diffusion 3.0 under $\text{NEF}=7$ and $\text{CFG}=2$.}
    \label{fig:m_sd_2_7}
\end{figure*}
\begin{figure*}
    \centering
    \includegraphics[width=\textwidth]{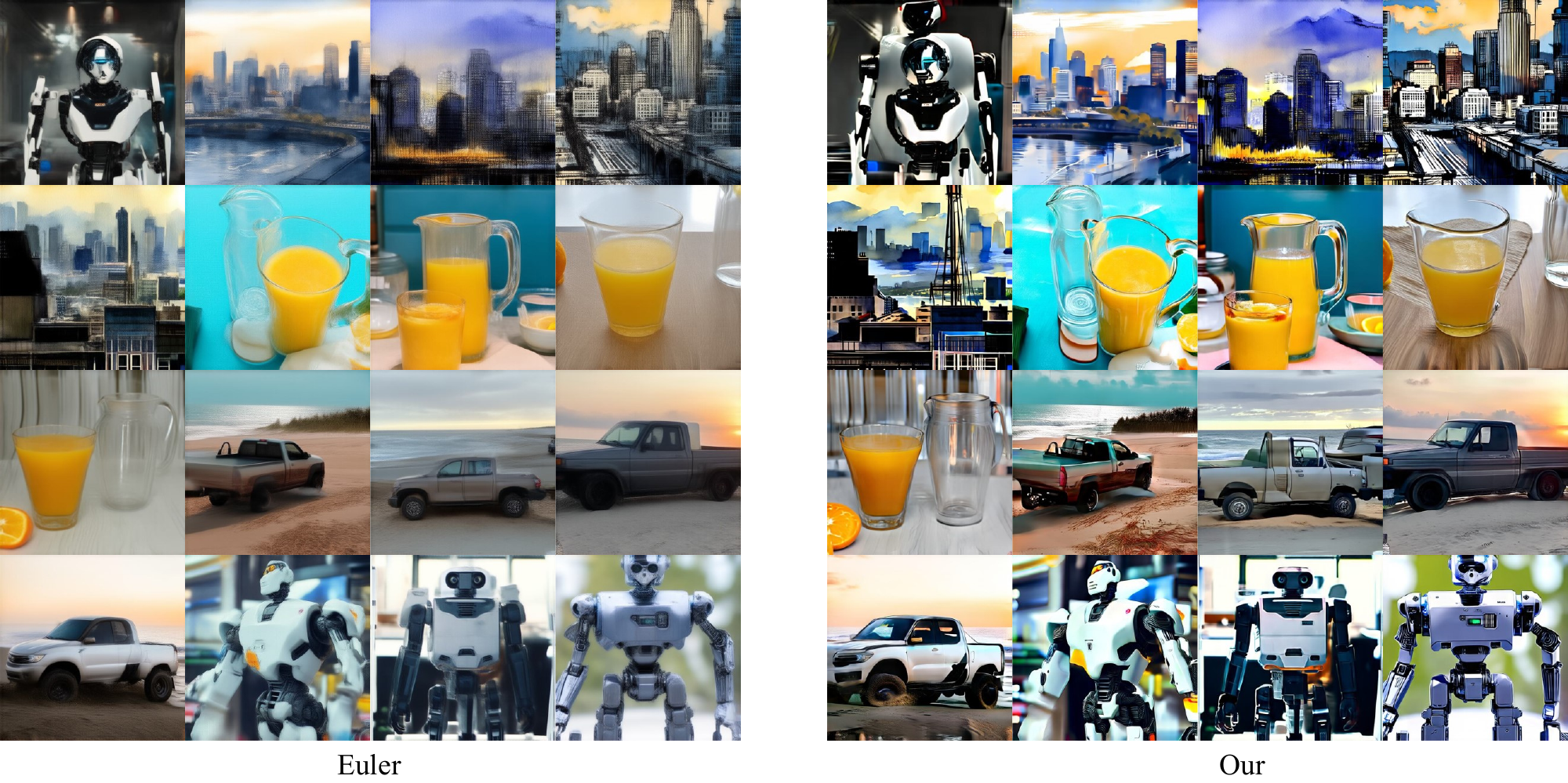}
    \caption{Qualitative examples for the conditional generation based on Stable Diffusion 3.0 under $\text{NEF}=6$ and $\text{CFG}=3$.}
    \label{fig:m_sd_3_6}
\end{figure*}
\begin{figure*}
    \centering
    \includegraphics[width=\textwidth]{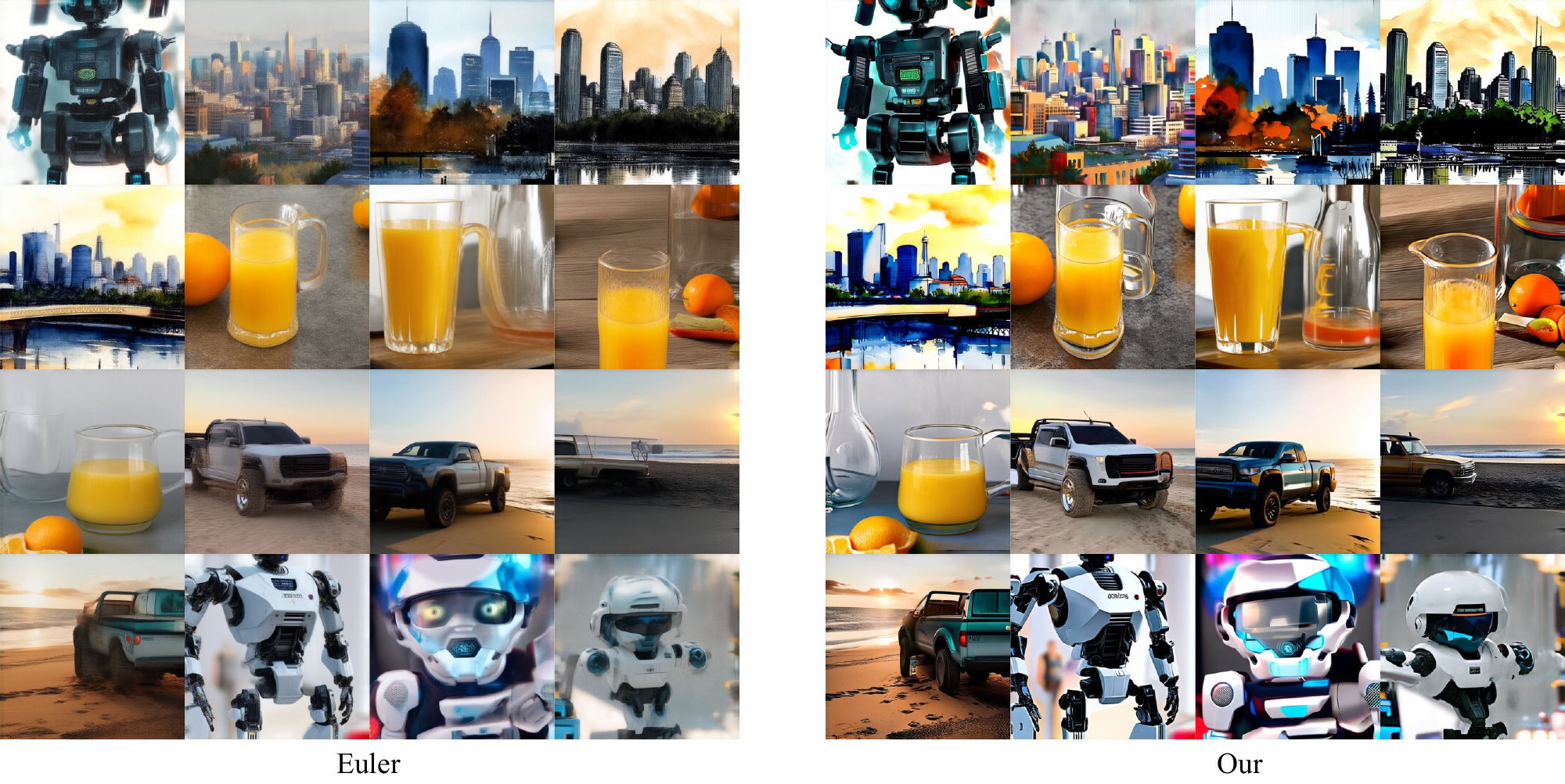}
    \caption{Qualitative examples for the conditional generation based on Stable Diffusion 3.0 under $\text{NEF}=7$ and $\text{CFG}=3$.}
    \label{fig:m_sd_3_7}
\end{figure*}
\begin{figure*}
    \centering
    \includegraphics[width=\textwidth]{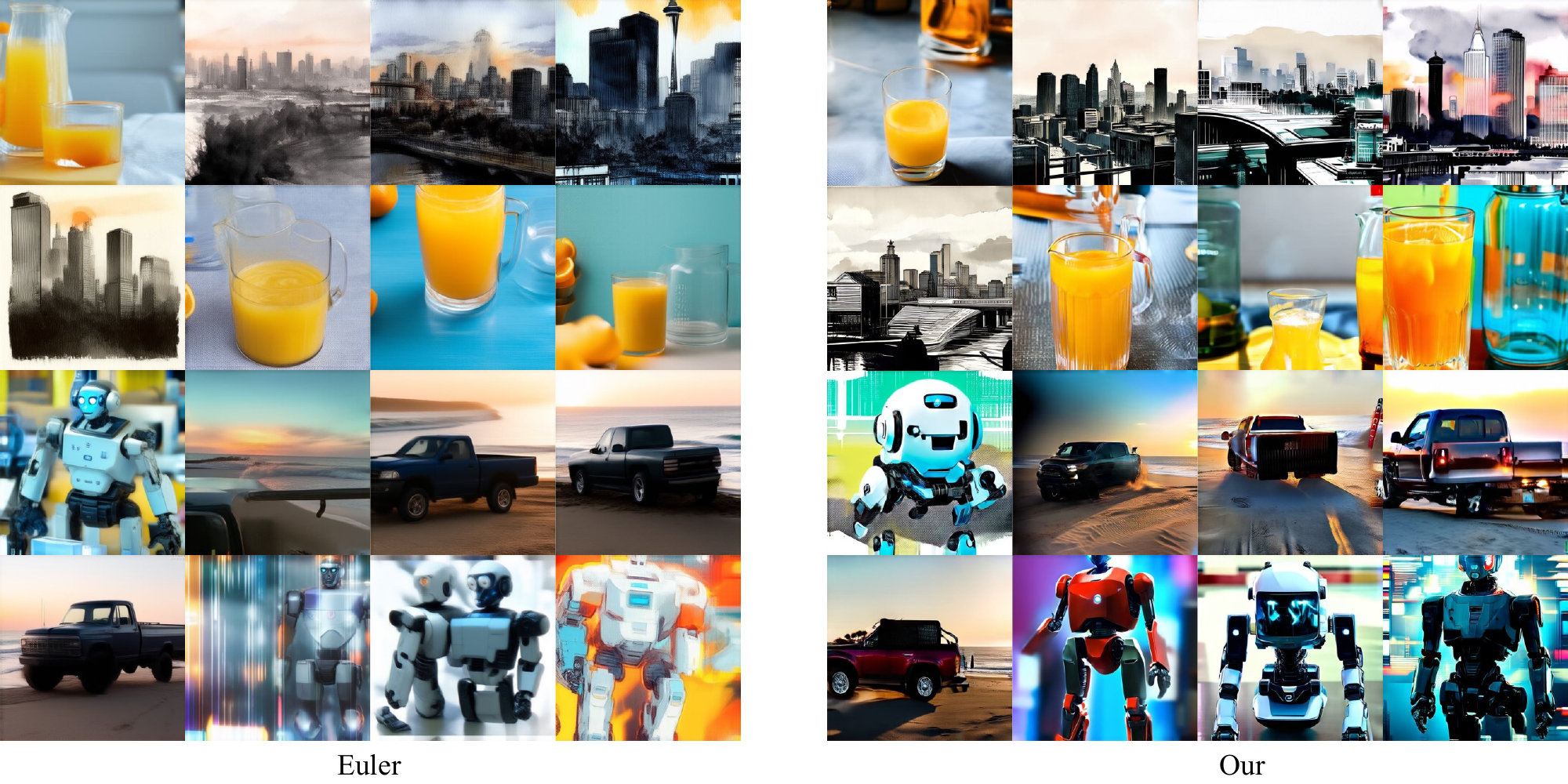}
    \caption{Qualitative examples for the conditional generation based on Stable Diffusion 3.0 under $\text{NEF}=6$ and $\text{CFG}=4$.}
    \label{fig:m_sd_4_6}
\end{figure*}
\begin{figure*}
    \centering
    \includegraphics[width=\textwidth]{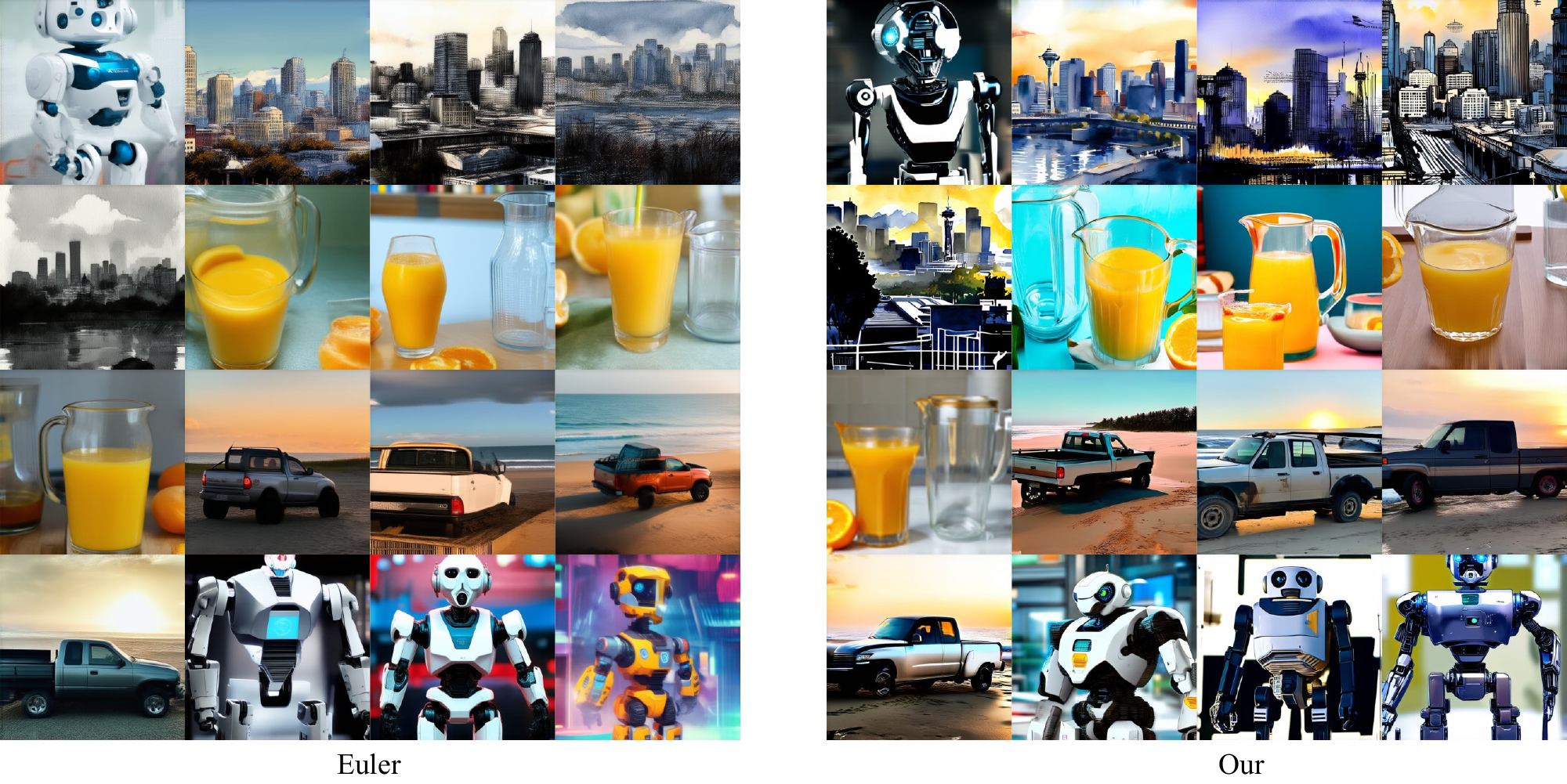}
    \caption{Qualitative examples for the conditional generation based on Stable Diffusion 3.0 under $\text{NEF}=7$ and $\text{CFG}=4$.}
    \label{fig:m_sd_4_7}
\end{figure*}

\end{document}